\theoremstyle{definition}
\newtheorem{definition}{Definition}[section]
\newtheorem{proposition}{Proposition}[section]
\DeclareMathOperator*{\argmax}{argmax}
\newcommand{\mathbbm}[1]{\text{\usefont{U}{bbm}{m}{n}#1}}
\newcommand\boldblue[1]{\textcolor{blue}{\textbf{#1}}}
\newcommand\boldred[1]{\textcolor{red}{\textbf{#1}}}
\journal{arxiv.org}
\begin{document}

\begin{frontmatter}



\title{Fuzzy granular approximation classifier}


\author{ Marko Palangeti\' c$^{a}$, Chris Cornelis$^a$,  Salvatore Greco$^{b,c}$, Roman S\l{}owi\' nski$^{d,e}$\\}

\address{$^a$Department of Applied Mathematics, Computer Science and Statistics, \\ Ghent University, Ghent, Belgium, \{marko.palangetic, chris.cornelis\}@ugent.be  \\
$^b$Department of Economics and Business, University of Catania, Catania, Italy,\\  salgreco@unict.it\\
$^c$Portsmouth Business School, Centre of Operations Research and Logistics (CORL), \\
University of Portsmouth, Portsmouth, United Kingdom \\
$^d$Institute of Computing Science,  Pozna\'n University of Technology, Pozna\'n, Poland,\\ roman.slowinski@cs.put.poznan.pl \\
$^e$Systems Research Institute, Polish Academy of Sciences, Warsaw, Poland \\}

\address{}

\begin{abstract}
In this article, a new Fuzzy Granular Approximation Classifier (FGAC) is introduced. The classifier is based on the previously introduced concept of the granular approximation and its multi-class classification case. The classifier is instance-based and its biggest advantage is its local transparency i.e., the ability to explain every individual prediction it makes. We first develop the FGAC for the binary classification case and the multi-class classification case and we discuss its variation that includes the Ordered Weighted Average (OWA) operators. Those variations of the FGAC are then empirically compared with other locally transparent ML methods. At the end, we discuss the transparency of the FGAC and its advantage over other locally transparent methods. We conclude that while the FGAC has similar predictive performance to other locally transparent ML models, its transparency can be superior in certain cases. 
\end{abstract}

\end{frontmatter}

\section{Introduction}
The concept of granular approximation is introduced in \cite{palangetic2021novel} and it is extended for the multi-class classification case in \cite{palangetic2022multi}. Granular approximation relies on the assumption that there is an underlying fuzzy membership degree of a particular data instance in the given decision class that is not observed and that should be estimated. The estimation is done using statistical optimization methods and using the assumption that if instance $u$ belongs to a certain class and $v$ is similar to $u$ then also $v$ should belong to the same class. The last assumption is known as the consistency property. The observed data are usually not consistent and the granular approximation also represents a novel relabeling of original data that satisfy the consistency property.

The motivation for introducing a membership degree in classification problems can be found in practice. For example, when we use a movie streaming service, we are often asked to rate a movie with like or dislike. We have only two options. But in reality, preference of movies is gradual; we like some movies more than others, but that graduality cannot be expressed with only two options: like and dislike. Hence, there exist a hidden preference degree that is not present in the observed data and that we try to estimate using granular approximations.

The goal of this article is to extend the granular approximation to new, unseen data. We design a classifier that estimates the membership degree of a new instance in a given decision class based on the consistency property. The name of the new classifier is \textit{Fuzzy Granular Approximation Classifier - FGAC}. The classifier is able to perform binary classification as well as multi-class classification natively. It belongs to the family of instance-based classifiers since the prediction is made based on the comparison of a new instance with those from the training set.

The main advantage of the classifier is its transparency which resides in the following two properties:
\begin{itemize}
    \item The explanation of the classifier can be derived from the ability to translate fuzzy logic into linguistic expressions.
    \item It is possible to identify the exact arguments that are in favour and against the prediction, as well as the strength of those arguments.
\end{itemize}

The remainder of this article is structured as follows.
In Section \ref{sec:preliminaries} we discuss the preliminaries of this article. Section \ref{sec:granular_approximations_recall} recalls the granular approximations as well as its version for the multi-class classification. In Section \ref{sec:making_predictions}, the novel Fuzzy Granular Approximation Classifier (FGAC) is introduced together with a version with OWA operators, while in Section \ref{sec:approx_calculation}, we discuss a way to speed up the training process, i.e., the calculation of the granular approximation. Section \ref{sec:experiments} contains empirical comparisons between different versions of FGAC and comparisons of FGAC with other ML models. In Section \ref{sec:transparency}, we explain why we consider FGAC as a transparent model as well as why we think that its transparency is superior compared with other ML models. Section \ref{sec:conclusion} concludes the article and outlines the future work.

\section{Preliminaries}
\label{sec:preliminaries}
\subsection{Fuzzy logic connectives}
In this subsection, the definitions and terminology are based on \cite{klement2013triangular}. Recall that {\em $t$-norm} $T:[0,1]^2 \rightarrow [0,1]$ is a binary operator which is commutative, associative, non-decreasing in both arguments, and $ \forall x \in [0,1] ,\, T(x,1) = x$. Since a $t$-norm is associative, we may extend it unambiguously to a $[0,1]^n \rightarrow [0,1]$ mapping for any $n > 2$. Some commonly used $t$-norms are listed in the left-hand side of Table \ref{table:tnorms}.

\begin{table}[H]
\begin{adjustbox}{width=\columnwidth,center}
    \begin{tabular}{c|rcl|rcl}
    Name & Definition & & & R-implicator\\
    \hline
Minimum &         $T_M(x,y)$ &=& $\min(x,y)$  & $I_{T_M}(x,y)$ &=& $\left\{\begin{array}{cc}
       1 & \mbox{if $x \le y$}  \\
            y & \mbox{otherwise} 
        \end{array}
        \right.$    \\
Product        & $T_P(x,y)$ &=& $xy$ & $I_{T_P}(x,y)$ &=& $\left\{\begin{array}{cc}
       1 & \mbox{if $x \le y$}  \\
            \frac{y}{x} & \mbox{otherwise} 
        \end{array}
        \right.$ \\
{\L}ukasiewicz    &    $T_L(x,y)$ &=& $\max(0,x+y-1)$ &    $I_{T_L}(x,y)$ &=& $\min(1,1-x+y)$ \\
Drastic     &   $T_D(x,y)$ &=& $\left\{\begin{array}{cc}
               \min(x,y) & \mbox{if $\max(x,y) = 1$}  \\
            0 & \mbox{otherwise} 
        \end{array}
        \right.$  &   $I_{T_D}(x,y)$ &=& $\left\{\begin{array}{cc}
           y & \mbox{if $x = 1$}  \\
            1 & \mbox{otherwise} 
        \end{array}
        \right.$\\
\makecell{Nilpotent \\ minimum} & $T_{nM}(x,y)$ &=& $\left\{\begin{array}{cc}
            \min(x,y) & \mbox{if $x + y > 1$}  \\
            0 & \mbox{otherwise} 
        \end{array}
        \right.$ & $I_{T_{nM}}(x,y)$ &=& $\left\{\begin{array}{cc}
            1 & \mbox{if $x \le y$}  \\
            \max(1-x,y) & \mbox{otherwise} 
        \end{array}
        \right.$\\
        \hline
    \end{tabular}
    \end{adjustbox}
    \vspace{8pt}
    \caption{Some common $t$-norms and their R-implicators}
    \label{table:tnorms}
\end{table}

A $t$-norm is an example of an aggregation operator. 
A binary aggregation operator $\mathbbm{A}: [0,1]^2 \rightarrow [0,1]$ (or just aggregation operator) is an operator which is non-decreasing in both arguments, and for which $\mathbbm{A}(0, 0) = 0$ and $\mathbbm{A}(1, 1) = 1$.
For $x, y \in [0,1]$, an aggregation operator is
\begin{itemize}
\item conjunctive if $\mathbbm{A}(x, y) \leq \min(x, y)$,
\item disjunctive if $\mathbbm{A}(x, y) \geq \max(x, y)$,
\item averaging if $\min(x,y) \leq \mathbbm{A}(x, y) \leq \max(x, y)$.
\end{itemize}
A $t$-norm is a conjunctive aggregation operator. 

An {\em implicator}  (or {\em fuzzy implication}) $I : [0,1]^2 \rightarrow [0,1]$ is a binary operator which is non-increasing in the first component, non-decreasing in the second one and such that $I(1,0) = 0$ and $I(0,0) = I(0,1) = I(1,1) = 1$. 
The residuation property holds for a $t$-norm $T$ and implicator $I$ if for all $x, y, z \in [0,1]$, it holds that
$$
T(x,y) \leq z \Leftrightarrow x \leq I(y,z).
$$ 
It is well-known that the residuation property holds if and only if $T$ is left-continuous and $I$ is defined as the residual implicator (R-implicator) of $T$, that is
$$I_T(x,y) = \sup \{\beta  \in [0,1] : T(x,\beta) \leq y\}.$$

The right-hand side of Table \ref{table:tnorms} shows the residual implicators of the corresponding $t$-norms. Note that all of them, except $I_{T_D}$, satisfy the residuation property.

If $T$ is a continuous $t$-norm, it is \textit{divisible}, i.e., for all $x, y \in [0,1]$, it holds that
$$
\min(x, y) = T(x, I(x, y)) = T(y, I(y, x)).
$$

A {\em negator}  (or { \em fuzzy negation}) $N : [0,1] \rightarrow [0,1]$ is a unary and non-increasing operator for which it holds that $N(0) = 1$ and $N(1) = 0$. A negator is involutive if $N(N(x)) = x$ for all $x \in [0,1]$. The standard negator, defined as 
$$
N_s(x) = 1-x,
$$ 
is involutive. 

For 
 implicator $I$, we define the negator induced by $I$ as $N(x) = I(x, 0)$. We will call triplet $(T, I, N)$, obtained as previously explained, a residual triplet. For a residual triplet we have that the following properties hold for all $x, y, z \in [0,1]$:
\begin{subequations}
\begin{align}
&\bullet    &&T(x, y) \leq x \quad \text{and} \quad T(x,y) \leq y, &&\label{eq:t-norm_smaller_parameters} \\
&\bullet    &&I(x, y) \geq y, 
&&\label{eq:implicator_greater_second_parameter} \\
&\bullet    &&T(x, I(x, y)) \leq y, 
&&\label{eq:modus_ponens} \\
&\bullet    &&x \leq y \Leftrightarrow I(x,y) = 1, \,  \text{(ordering property)} &&\label{eq:ordering_property} \\
&\bullet    &&T(x, I(y, z)) \leq I(I(x,y), z), &&\label{eq:t_norm_implicator_property2}\\
&\bullet    &&I(T(x,y), z) = I(x, I(y, z)), &&\label{eq:t_norm_implicator_property} \\
&\bullet    &&T(x, N(y)) \leq N(I(x,y)) \, \text{(consequence of (\ref{eq:t_norm_implicator_property2}) when $z=0$),} &&\label{eq:implicator_negator_property}\\
&\bullet    &&N(T(x,y)) = I(x, N(y)) \, \text{(consequence of (\ref{eq:t_norm_implicator_property}) when $z=0$)}.
&&\label{eq:implicator_negator_property2}
\end{align}
\end{subequations}

For a given involutive negator $N$, we say that aggregation operator $\mathbbm{A}$ is $N$-invariant if
\begin{equation}
    \label{eq:n_invariance}
    \mathbbm{A}(x, y) = N(\mathbbm{A}(N(x), N(y)))
\end{equation}
It is easy to verify that conjunctive and disjunctive operators cannot be $N$-invariant. 

Two fuzzy binary operators $B^1$ and $B^2$ are isomorphic if there exists a bijection $\varphi:[0,1] \rightarrow [0,1]$ such that $B^1 = \varphi^{-1}(B^2(\varphi(x), \varphi(y)))$ while unary operators $V^1$ and $V^2$ are isomorphic if $V^1 = \varphi^{-1}(V^2(\varphi(x)))$. Moreover, we write $B^1 \equiv B^2_{\varphi}$ and $V^1 \equiv V^2_{\varphi}$.

Every involutive negator is isomporphic to the standard negator $N_s$.

If residual triplet $(T, I, N)$ is generated by $t$-norm $T$, then the residual triplet generated by $T_{\varphi}$ is $(T_{\varphi}, I_{\varphi}, N_{\varphi})$.

A $t$-norm for which the induced negator of its R-implicator is involutive is called an IMTL $t$-norm. In Table \ref{table:tnorms}, $T_L$ and $T_{nM}$ are IMTL $t$-norms where the corresponding induced negator is $N_s$. A residual triplet $(T, I, N)$ that is generated with an IMTL $t$-norm is called an IMTL triplet. If $(T, I, N)$ is an IMTL triplet, then $(T_{\varphi}, I_{\varphi}, N_{\varphi})$ is also an IMTL triplet. 

For an IMTL triplet, we have that the following properties hold for all $x, y, z \in [0,1]$:
\begin{subequations}
\begin{align}
&\bullet    &&I(N(x), N(y)) = I(y, x), &&\label{eq:implicator_negator_property3}\\
&\bullet    &&T(x, N(y)) = N(I(x,y)). &&\label{eq:implicator_negator_property4}
\end{align}
\end{subequations}

A continuous $t$-norm is IMTL if and only if it is isomorphic to the Łukasiewicz $t$-norm. Such $t$-norm is \textit{strongly max-definable}, i.e., for all $x,y \in [0,1]$, it holds that
\begin{equation}
\label{eq:strong_max_definability}
\max(x, y) = I(I(x, y), y) = I(I(y, x), x).
\end{equation}
A residual triplet generated by a $t$-norm isomorphic to $T_L$, $T_{L,\varphi}$ is denoted by $(T_{L,\varphi}, I_{L, \varphi}, N_{L, \varphi})$. Note that $N_L \equiv N_s$.

\subsection{Fuzzy sets and fuzzy relations}
\label{subsec:fuzzy_sets_relations}
Given a non-empty universe set $U$, a fuzzy set $A$ on $U$ is an ordered pair $(U, m_A)$, where $m_A:U \rightarrow [0,1]$ is a membership function that indicates how much an element from $U$ is contained in $A$. Instead of $m_A(u)$, the membership degree is often written as $A(u)$. If the image of $m_A$ is $\{0,1\}$ then $A$ is a crisp (ordinary) set. For negator $N$, the fuzzy complement $coA$ is defined as $coA(u) = N(A(u))$ for $u\in U$. If $A$ is crisp then $coA$ reduces to the standard complement. For $\alpha \in (0,1]$, the $\alpha$-level set of fuzzy set $A$ is a crisp set defined as $A_{\alpha} = \{u \in U; A(u) \geq \alpha \}$.

A fuzzy relation $\widetilde{R}$ on $U$ is a fuzzy set on $U \times U$, i.e., a mapping $\widetilde{R}: U \times U \rightarrow [0,1]$ which indicates how much two elements from $U$ are related. Some relevant properties of fuzzy relations include:
\begin{itemize}
    \item $\widetilde{R}$ is reflexive if $\forall u \in U, \widetilde{R}(u,u) = 1$.
    \item $\widetilde{R}$ is symmetric if $\forall u,v \in U,\ \widetilde{R}(u,v) = \widetilde{R}(v,u)$.
    \item $\widetilde{R}$ is $T$-transitive w.r.t. $t$-norm $T$ if $\forall u,v,w \in U$ it holds that \\
    $T(\widetilde{R}(u,v),\widetilde{R}(v,w)) \leq \widetilde{R}(u,w)$.
\end{itemize}{}
A reflexive and $T$-transitive fuzzy relation is called a $T$-preorder relation while a symmetric $T$-preorder is a $T$-equivalence relation.

To illustrate some of these fuzzy relations, we assume that instances from $U$ are described with a finite set of numerical attributes $Q$. For attribute $q \in Q$, Let $u^{(q)}$ and $v^{(q)}$ be the evaluations of instances $u$ and $v$ on attribute $q$.
An example of a $T_L$-preorder relation (expressing dominance) on attribute $q$, given in \cite{palangetic2021fuzzy}, is
\begin{equation}
    \label{eq:triangular_dominance}
    \widetilde{R}_q^\gamma(u,v) = \max \left (\min \left (1 -\gamma \frac{v^{(q)} - u^{(q)}}{range(q)},1 \right ), 0 \right),
\end{equation}{}
where $\gamma$ is a positive parameter and $range(q)$ is the difference between the maximal and minimal value on $q$. 
An example of a $T_L$-equivalence relation (expressing indiscernibility) on the same attribute is
\begin{align}
\label{eq: triangular similarity}
   \widetilde{R}_q^{\gamma} (u,v) = \max \left ( 1 - \gamma \frac{|u^{(q)} - v^{(q)}|}{range(q)},0\right).
\end{align}
In both cases, the relation over all attributes from $Q$ is defined as $\widetilde{R}(u,v) = \min_{q \in Q} \widetilde{R}_q(u,v)$.

$T_L$-equivalence (\ref{eq: triangular similarity}) can be generalized to the broader family of $T_L$-equivalences based on distance metrics. For a given distance metrics $d$ on $U$, a $T_L$ equivalence based on $d$ can be defined as:
\begin{equation}
\label{eq:general_equivalence}
 \widetilde{R}_d^{\gamma} (u,v) = \max \left ( 1 - \gamma \cdot d(u,v), 0 \right).
\end{equation}
Then, $T_L$-equivalence (\ref{eq: triangular similarity}) is a member of family (\ref{eq:general_equivalence}) for $d$ being distance based on the supremum norm (or $l_{\infty}$ norm) applied on scaled attributes (division by $range(q)$). Also, in our experiments we will use a relation from (\ref{eq:general_equivalence}) based on the Euclidean distance applied on the scaled attributes. We will call those two similarity measures supremum similarity and Euclidean similarity respectively. 

In the definition of the Euclidean similarity, we may observe that the distance can become large if the number of attributes is high which leads to the low values of the similarity relation. In order to avoid that, it is desirable to average the distances over individual attributes, i.e., instead of $d(u, v)$ we use $\frac{d(u,v)}{\sqrt{|Q|}}$ and that value will be multiplied with $\gamma$. We use this version of the definition throughout the paper. 

\subsection{Fuzzy rough and granularly representable sets}
This subsection is based on \cite{palangetic2021granular}.
Let $U$ be a finite set of instances, $A$ a fuzzy set on $U$ and $\widetilde{R}$ a $T$-preorder relation on $U$. The fuzzy PRSA lower and upper approximations of $A$ are fuzzy sets for which the membership function is defined as:
\begin{equation}
\label{eq:fuzzy rough approx}
\begin{aligned}
\underline{\text{apr}}_{R}^{\min, I}(A)(u) = \min \{I(\widetilde{R}(v,u), A(v)); v \in U\} \\
\overline{\text{apr}}_{R}^{\max, T}(A)(u) = \max \{T(\widetilde{R}(u,v), A(v)); v \in U\}.
\end{aligned}
\end{equation}
For an IMTL $t$-norm and the corresponding implicator $I$ and negator $N$, we have the well known duality property.
\begin{align}
\label{eq:duality}
    \begin{split}
    N (\underline{\text{apr}}_{R}^{\min, I}(A)(u) ) &=  \overline{\text{apr}}_{R}^{\max, T}(coA)(u) \\ 
    N (\overline{\text{apr}}_{R}^{\max, T}(A) (u)) &=  
    \underline{\text{apr}}_{R}^{\min, I}(coA) (u)
    \end{split}
\end{align}

A fuzzy granule with respect to fuzzy relation $\widetilde{R}$ and parameter $\lambda \in [0,1]$ is defined as a parametric fuzzy set
\begin{align}
\label{eq:granule}
    \widetilde{R}^+_\lambda (u) = \{(v, T(\widetilde{R}(v,u), \lambda ); v \in U\}.
\end{align}
while the granule with respect to the inverse fuzzy relation $\widetilde{R}^{-1}$ is 
\begin{align}
\label{eq:granule2}
    \widetilde{R}^-_\lambda (u) = \{(v, T(\widetilde{R}(u,v), \lambda ); v \in U\}.
\end{align}
Here, parameter $\lambda$ describes the association of instance $u$ to a particular decision. For example, in classification problems, it represents the membership degree of $u$ to  a particular decision class. 
A fuzzy set $A$ is granularly representable (GR) w.r.t. relation $\widetilde{R}$ if 
$$
A = \bigcup \{ \widetilde{R}^+_{A(u)}(u); u \in U\},
$$
where the union is defined using $\max$ operator.
Some equivalent forms to define granular representability are such that for all $u,v \in U$:
\begin{equation}
\label{eq:granular_rep}
T(\widetilde{R}(v,u), A(u)) \leq A(v) \Leftrightarrow \widetilde{R}(v,u) \leq I(A(u), A(v)) 
\end{equation}

\begin{proposition} \cite{palangetic2021novel}
\label{prop:complement_representable}
If fuzzy set $A$ is granularly representable w.r.t. $T$-preorder relation $\Tilde{R}$, then $coA$ is granularly representable w.r.t.  $\Tilde{R}^{-1}$.
\end{proposition}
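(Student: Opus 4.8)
The plan is to reduce the claim to the algebraic characterization of granular representability in (\ref{eq:granular_rep}) and then push the hypothesis through the residuated-lattice identities listed above. First I would spell out the target: by (\ref{eq:granular_rep}), $coA$ being granularly representable with respect to $\widetilde{R}^{-1}$ means $T(\widetilde{R}^{-1}(v,u), coA(u)) \leq coA(v)$ for all $u,v \in U$; since $\widetilde{R}^{-1}(v,u) = \widetilde{R}(u,v)$ and $coA = N \circ A$, this is precisely
\[
T(\widetilde{R}(u,v),\, N(A(u))) \leq N(A(v)) \qquad \text{for all } u,v \in U .
\]
On the hypothesis side, $A$ granularly representable with respect to $\widetilde{R}$ gives $\widetilde{R}(v,u) \leq I(A(u),A(v))$ for all $u,v$, and swapping the names of $u$ and $v$ this reads $\widetilde{R}(u,v) \leq I(A(v),A(u))$. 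I would also note in passing that $\widetilde{R}^{-1}$ is again a $T$-preorder (reflexivity is immediate, and $T$-transitivity of $\widetilde{R}^{-1}$ follows from that of $\widetilde{R}$ using commutativity of $T$), so that ``granularly representable with respect to $\widetilde{R}^{-1}$'' is a meaningful statement.

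Next I would fix $u,v$ and run a short chain of inequalities. By monotonicity of $T$ in its first argument and $\widetilde{R}(u,v) \leq I(A(v),A(u))$,
\[
T(\widetilde{R}(u,v),\, N(A(u))) \leq T\bigl(I(A(v),A(u)),\, N(A(u))\bigr).
\]
Applying (\ref{eq:implicator_negator_property}), namely $T(x,N(y)) \leq N(I(x,y))$, with $x = I(A(v),A(u))$ and $y = A(u)$,
\[
T\bigl(I(A(v),A(u)),\, N(A(u))\bigr) \leq N\bigl(I(I(A(v),A(u)),\, A(u))\bigr).
\]
Finally I would invoke the auxiliary inequality $I(I(x,y),y) \geq x$, valid for every residual triplet: by the residuation property it is equivalent to $T(x,I(x,y)) \leq y$, which is exactly (\ref{eq:modus_ponens}). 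Hence $I(I(A(v),A(u)),A(u)) \geq A(v)$, and since $N$ is non-increasing, $N\bigl(I(I(A(v),A(u)),A(u))\bigr) \leq N(A(v))$. Concatenating the three displays yields $T(\widetilde{R}(u,v), N(A(u))) \leq N(A(v))$, as required.

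The argument is in the end a bookkeeping exercise with the identities (\ref{eq:t-norm_smaller_parameters})--(\ref{eq:implicator_negator_property2}); nothing deep happens. The one place that needs a moment's attention is isolating the right lemma, $I(I(x,y),y) \geq x$, and observing that it holds for arbitrary residual triplets, so that no involutivity of $N$ and no IMTL assumption is needed. The other thing to be careful about is the variable-swap bookkeeping, since the hypothesis concerns $\widetilde{R}$ while the conclusion concerns $\widetilde{R}^{-1}$; I expect these two points, rather than any individual algebraic step, to be where an error would most easily creep in.
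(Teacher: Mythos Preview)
Your argument is correct. The paper itself does not prove this proposition; it simply imports it from \cite{palangetic2021novel}, so there is no in-paper proof to compare against. Your chain via (\ref{eq:implicator_negator_property}) together with the auxiliary lemma $I(I(x,y),y)\geq x$ (equivalently (\ref{eq:modus_ponens}) through residuation) is sound, and your bookkeeping on $\widetilde{R}$ versus $\widetilde{R}^{-1}$ is accurate.

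One remark on efficiency: a shorter derivation is available directly from (\ref{eq:implicator_negator_property2}). Starting from the hypothesis $T(\widetilde{R}(v,u),A(u))\leq A(v)$ and applying the non-increasing $N$ gives
\[
N(A(v)) \;\leq\; N\bigl(T(\widetilde{R}(v,u),A(u))\bigr) \;=\; I\bigl(\widetilde{R}(v,u),\,N(A(u))\bigr),
\]
and residuation yields $T(\widetilde{R}(v,u),coA(v))\leq coA(u)$; renaming $u\leftrightarrow v$ this is exactly $T(\widetilde{R}^{-1}(v,u),coA(u))\leq coA(v)$. This avoids the detour through (\ref{eq:implicator_negator_property}) and the auxiliary inequality, though your route has the merit of making explicit that no involutivity of $N$ is required.
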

\begin{proposition}
\label{prop:l_and_s_gp} \cite{palangetic2021granular}
It holds that $\underline{\text{apr}}_{\Tilde{R}}^{\min, I}(A)$ is the largest GR set contained in $A$, while $\overline{\text{apr}}_{\Tilde{R}}^{\max, T}(A)$ is the smallest GR set containing $A$.
\end{proposition}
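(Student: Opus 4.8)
The plan is to establish four facts, writing $L=\underline{\text{apr}}_{\widetilde{R}}^{\min,I}(A)$ and $S=\overline{\text{apr}}_{\widetilde{R}}^{\max,T}(A)$: that $L$ is GR and $L\subseteq A$; that every GR set $B\subseteq A$ satisfies $B\subseteq L$; and dually that $S$ is GR, $A\subseteq S$, and every GR set $B\supseteq A$ satisfies $S\subseteq B$. The only ingredients needed are reflexivity and $T$-transitivity of $\widetilde{R}$, commutativity and associativity of $T$, monotonicity, the boundary laws $T(1,x)=x$ and $I(1,x)=x$, the residuation property, modus ponens (\ref{eq:modus_ponens}), and the characterisation (\ref{eq:granular_rep}) of granular representability; in particular no IMTL hypothesis is required, and I would not route the second half through the duality (\ref{eq:duality}).

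The inclusions $L\subseteq A$ and $A\subseteq S$ follow from reflexivity: plugging $v=u$ into the minimum defining $L(u)$ gives $I(\widetilde{R}(u,u),A(u))=I(1,A(u))=A(u)$, and plugging $v=u$ into the maximum defining $S(u)$ gives $T(1,A(u))=A(u)$. For extremality, let $B$ be GR with $B\subseteq A$. For every $u,v$, (\ref{eq:granular_rep}) and $B\subseteq A$ give $T(\widetilde{R}(v,u),B(u))\le B(v)\le A(v)$, hence $B(u)\le I(\widetilde{R}(v,u),A(v))$ by residuation; minimising over $v$ yields $B(u)\le L(u)$. Dually, if $B$ is GR with $A\subseteq B$, then for every $u,v$ we have $T(\widetilde{R}(u,v),A(v))\le T(\widetilde{R}(u,v),B(v))\le B(u)$, the last inequality being (\ref{eq:granular_rep}) with $u$ and $v$ interchanged; maximising over $v$ gives $S(u)\le B(u)$.

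The heart of the proof is that $L$ and $S$ are themselves GR. For $L$, fix $u,v$ and an arbitrary $w\in U$. By residuation together with associativity and commutativity of $T$, the inequality $T(\widetilde{R}(v,u),L(u))\le I(\widetilde{R}(w,v),A(w))$ is equivalent to $T\big(T(\widetilde{R}(w,v),\widetilde{R}(v,u)),L(u)\big)\le A(w)$. Since $T(\widetilde{R}(w,v),\widetilde{R}(v,u))\le\widetilde{R}(w,u)$ by $T$-transitivity, monotonicity of $T$ reduces this to $T(\widetilde{R}(w,u),L(u))\le A(w)$; and because $L(u)\le I(\widetilde{R}(w,u),A(w))$ by definition of the minimum, modus ponens (\ref{eq:modus_ponens}) closes the gap. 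As $w$ was arbitrary, $T(\widetilde{R}(v,u),L(u))\le\min_{w\in U}I(\widetilde{R}(w,v),A(w))=L(v)$, i.e. (\ref{eq:granular_rep}) holds for $L$. For $S$, fix $u,v$ and let $w_0\in U$ attain the maximum defining $S(u)$ (possible since $U$ is finite); then associativity, $T$-transitivity and monotonicity give $T(\widetilde{R}(v,u),S(u))=T\big(T(\widetilde{R}(v,u),\widetilde{R}(u,w_0)),A(w_0)\big)\le T(\widetilde{R}(v,w_0),A(w_0))\le S(v)$, so (\ref{eq:granular_rep}) holds for $S$.

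I expect the one genuinely error-prone point to be keeping the argument order straight in the non-symmetric preorder $\widetilde{R}$: the transitivity step for $L$ must compose $\widetilde{R}(w,v)$ with $\widetilde{R}(v,u)$ into $\widetilde{R}(w,u)$ in exactly the orientation that the defining minimum of $L$ uses, and the mirror-image orientation must be used for $S$; everything else is a routine application of residuation and monotonicity.
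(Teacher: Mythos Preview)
Your argument is correct: the four pieces (containment, extremality among GR sets, and granular representability of both approximations) are all handled cleanly, and the orientation of the $T$-transitivity steps matches the non-symmetric preorder as required. Note, however, that the present paper does not give its own proof of this proposition; it merely cites \cite{palangetic2021granular}, so there is no in-paper argument to compare against---your direct proof via residuation, $T$-transitivity and modus ponens is exactly the standard one and would be a suitable stand-in.
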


\subsection{Ordered weighted average}
\label{subsec:owa}
In order to avoid exclusive influence of extreme values (minima and maxima) in decision making, an Ordered Weighted Average (OWA) operator is introduced. While keeping high influence of the extrema, OWA operator also utilizes the values that are non-extreme. It can be seen as a "softer" version of minima and maxima. The OWA aggregation of set $V$ of $n$ real numbers with weight vector $W=  (w_1,w_2, . . . ,w_{n})$, where $w_i \in [0, 1]$ and $\Sigma_{i = 1}^{n} w_i = 1$, is given by
$$
\text{OWA}_W(V) = \sum_{i = 1}^{n} w_i  v_{(i)},
$$
where $v_{(i)}$ is the $i$-th largest element in the set $V$.
Different weight vectors are used depending if they will be used to replace $\min$ or $\max$ operator. Those operators can be expressed trough OWA operators with the corresponding weights:
\begin{equation*}
W_{\min}  = (0, \dots, 0, 1), \quad W_{\max} = (1, 0, \dots, 0)
\end{equation*}
We say that these weights are complementary i.e., it holds that $(W_{\min})_i = (W_{\max})_{n - i + 1}$. We denote the complementarity with $W_{\min} = \overline{W_{\max}}$ and $W_{\max} = \overline{W_{\min}}$.
Denote with $W_{L}$ the weights used to replace $\min$ and with $W_U$ weights used to replace $\max$. The well-known weights used in practice are
\begin{itemize}
    \item additive: $W_{L}^{add} = (\frac{2}{n(n +1)}, \frac{4}{n   (n + 1 )  }, \dots, \frac{2(n-1)}{n(n + 1)} ,  \frac{2}{n + 1})$, $W^{add}_U = \overline{W^{add}_L}$,
    \item exponential: $W_{L}^{exp} = (\frac{1}{2^n - 1}, \frac{2}{2^n - 1}, \dots, \frac{2^{n-2}}{2^n - 1} ,  \frac{2^{n-1}}{2^n - 1})$, $W^{exp}_U = \overline{W^{exp}_L}$,
    \item inverse additive: $W_{L}^{invadd} = (\frac{1}{nD_n}, \frac{1}{(n-1)D_n}, \dots, \frac{1}{2D_n} ,  \frac{1}{D_n})$, $W^{invadd}_U = \overline{W^{invadd}_L}$ for $D_n = \sum_{i = 1}^{n} \frac{1}{i}$.
\end{itemize}

\subsection{Datasets}
\label{subsec:datasets}

We present the datasets that will be used in the experiments. We collected 18 classification datasets that are available in the UCI Machine Learning repository \cite{ucidatasets2019}. Their description is provided in Table \ref{tab:datasets_description}.

\begin{table}[H]
    \begin{adjustbox}{width=\columnwidth,center}
    \begin{tabular}{lrrrrl}
\toprule
        name &  \# of instances & \makecell{\# of numerical \\ attributes} & \makecell{ \# of nominal \\ attributes} &  \# of classes & \makecell{distribution of \\ instances among classes} \\
\midrule
  australian &             690 &                          8 &                        6 &             2 &                                   (383, 307) \\
     balance &             625 &                          4 &                        0 &             3 &                               (288, 288, 49) \\
      breast &             277 &                          0 &                        9 &             2 &                                    (196, 81) \\
        bupa &             345 &                          6 &                        0 &             2 &                                   (200, 145) \\
   cleveland &             297 &                         13 &                        0 &             5 &                        (160, 54, 35, 35, 13) \\
         crx &             653 &                          6 &                        9 &             2 &                                   (357, 296) \\
      german &            1000 &                          7 &                       13 &             2 &                                   (700, 300) \\
       glass &             214 &                          9 &                        0 &             6 &                      (76, 70, 29, 17, 13, 9) \\
    haberman &             306 &                          3 &                        0 &             2 &                                    (225, 81) \\
       heart &             270 &                         13 &                        0 &             2 &                                   (150, 120) \\
  ionosphere &             351 &                         33 &                        0 &             2 &                                   (225, 126) \\
mammographic &             830 &                          5 &                        0 &             2 &                                   (427, 403) \\
        pima &             768 &                          8 &                        0 &             2 &                                   (500, 268) \\
     saheart &             462 &                          8 &                        1 &             2 &                                   (302, 160) \\
 spectfheart &             267 &                         44 &                        0 &             2 &                                    (212, 55) \\
       vowel &             990 &                         13 &                        0 &            11 & \makecell{ (90, 90, 90, 90, 90, \\ 90, 90, 90, 90, 90, 90)} \\
        wdbc &             569 &                         30 &                        0 &             2 &                                   (357, 212) \\
   wisconsin &             683 &                          9 &                        0 &             2 &                                   (444, 239) \\
\bottomrule
\end{tabular}
\end{adjustbox}
    \caption{Description of datasets}
    \label{tab:datasets_description}
\end{table}
We may notice that some datasets have nominal features which have to be encoded into numerical ones. In general, we use One Hot Encoding \cite{garavaglia1998smart} for this purpose. If a ML method that uses distance metrics is applied, a nominal attribute contributes to the distance with value 1 if the category of two instances is different, and 0 if they are the same. Also, since we use the factor $\frac{1}{\sqrt{|Q|}}$ in the definition of Euclidean similarity, where value $|Q|$ is the one we obtain after One Hot Encoding is applied. 

\subsection{Methods to compare with}
\label{subsec:models_to_compare_with}
The performance of FGAC will be compared with other popular classification models. Due to the local transparency of FGAC, that will be discussed later, we will compare it with  well-known simple and (up to some degree) locally transparent ML algorithms that are widely in use. These methods are k-Nearest Neigbours \cite{fix1989discriminatory}, Classification and Regression Tree \cite{quinlan2014c4} and Learning Vector Quantization \cite{kohonen1995learning}.

k-Nearest Neigbours (kNN) is a non-parametric lazy approach where the decision for a new particular instance is obtained based on the majority decisions of the $k$ closest instances w.r.t. a given distance metrics. The transparency of this approach boils down to our ability to detect the instances based on which the decision was made. However, the transparency fades as $k$ increases because it becomes hard to understand how a prediction was made based on a high number of other instances.

Classification And Regression Tree (CART) can be seen as a hierarchical rule-based model. In every step of the training phase, a split of the training set of instances is performed based on a provided criterion. In the first step, the whole set of instances is split, while in every subsequent step, a subset of the previous split is chosen and split. This way of splitting creates a binary decision tree. Since every split is performed on a specific attribute, a hierarchical set of rules can be induced in order to explain a particular prediction made by CART. These rules enable the transparency of the model.

Learning Vector Quantization (LVQ) is a prototype based model, where for each decision class a few points from the attribute space called \textit{prototypes} are learned. These prototypes do not necessarily coincide with the training instances. After the prototypes are learned, a new instance is classified based on the decision of the nearest prototype. The transparency of  LVQ lies in the fact that one is able to identify the prototype responsible for the prediction. 

We will also compare our model with the family of the k-Fuzzy-Rough Nearest Neighbour (kFRNN) methods as another family of methods based on fuzzy logic and consistency in data \cite{jensen2011fuzzy, ramentol2014ifrowann}. It is a lazy approach were for every new instance and for every decision class, we calculate its fuzzy rough lower approximation degree, upper approximation degree and take the mean as the membership degree in that decision class. Then, the decision class is determined as the one for which the highest membership degree is achieved. kFRNN also invokes OWA operators as a replacement for min and max operators in the lower and upper approximations. 

\section{Training procedure and granular approximations}
\label{sec:granular_approximations_recall}
\subsection{General case}
This section also recalls previously known results, but because of its importance, we separate it from Preliminaries. The section is based on the results from  \cite{palangetic2021novel} and \cite{palangetic2022multi}.
A granular approximation is a granularly representable set that is as close as possible to the observed fuzzy set (set that is approximated) with respect to the given closeness criterion. The closeness is measured by a loss function $L:\mathbb{R} \times \mathbb{R} \rightarrow \mathbb{R}^+$. For a given loss function $L$, observed fuzzy set $\Bar{A}$, relation $\widetilde{R}$ and residual triplet $(T, I, N)$, the granular approximation $\hat{A}$ is obtained as a result of the following optimization problem:
\begin{equation}
\label{eq:general_optimization}
\begin{aligned}
&\text{minimize}  && \displaystyle\sum_{u \in U}  L(\Bar{A}(u), \hat{A}(u)) \\
&\text{subject to}    && T( \widetilde{R}(u,v), \hat{A}(v)) \leq \hat{A}(u), \quad   u, v \in U\\
  &              &&0 \leq \hat{A}(u) \leq 1, \quad u \in U.
\end{aligned}
\end{equation}
The objective function in (\ref{eq:general_optimization}) ensures that the resulting fuzzy set $\hat{A}$ is as close as possible to the given fuzzy set $A$ (w.r.t. loss function $L$) while the constraints of (\ref{eq:general_optimization}) guarantee that $\hat{A}$ is granularly representable. Value of $\hat{A}$ stands for an estimated membership degree of $u \in A$.

We recall two well-known loss functions; $p$-quantile loss:
\begin{equation}
    L_{p} = (y, \hat{y}) = (y - \hat{y}) (p - \mathbf{1}_{y-\hat{y} < 0}) = \begin{cases}
        p|y - \hat{y}| & \text{if   }  y-\hat{y} > 0, \\
        (1-p)|y - \hat{y}| & \text{otherwise},
        \end{cases} \label{eq:quantile_loss}
\end{equation}
and mean squared error:
\begin{equation}
L_{MSE}(y, \hat{y}) = (y - \hat{y})^2 \label{eq:mse}.
\end{equation}
The $p$-quantile loss for $p=\frac{1}{2}$ is called mean absolute error:
\begin{equation}
    L_{MAE}(y, \hat{y}) = |y - \hat{y}| \label{eq:mae}
\end{equation}

It was shown that optimization problem (\ref{eq:general_optimization}) can be efficiently solved if $T$ is isomorphic to $T_L$ ($T = T_{L,\varphi}$) and for $L$ being the scaled $p$-quantile loss: $L_{p,\varphi} = L_p(\varphi(y), \varphi(\hat{y}))$ or the scaled mean squared error: $L_{MSE,\varphi} = L_{MSE}(\varphi(y), \varphi(\hat{y}))$. In the case of $L_{p,\varphi}$, problem (\ref{eq:general_optimization}) becomes a linear program:
\begin{equation}
\label{eq:T_L_linear}
\begin{aligned}
&\text{minimize}   && p\displaystyle\sum_{u \in U}x_u + (1-p)\displaystyle\sum_{u \in U}y_u, &&\\
&\text{subject to}    &&\alpha_u - \alpha_v + 1 \geq \widetilde{R}_{\varphi}(u,v),  && u, v \in U\\
        &    &&x_u - y_u =\Bar{A}_{\varphi}(u) - \alpha_u,  && u \in U\\
  &              && x_u \geq 0, \, y_u\geq 0. &&u \in U
\end{aligned}
\end{equation}
where $x_u = \max(\varphi(\Bar{A}(u) - \alpha_u ), 0)$, $y_u =  \max( \alpha_u - \varphi(\Bar{A}(u)), 0)$, $\Bar{A}_{\varphi}(u) = \varphi(\Bar{A}(u))$ and $\widetilde{R}_{\varphi}(u,v) = \varphi(\widetilde{R}(u,v))$. In the case of $L_{MSE,\varphi}$, problem (\ref{eq:general_optimization}) becomes a quadratic program:
\begin{equation}
\label{eq:T_L_quadratic}
\begin{aligned}
&\text{minimize}  && \displaystyle\sum_{u \in U} (\alpha_u - \Bar{A}_{\varphi}(u))^2, &&\\
&\text{subject to}    &&\alpha_u - \alpha_v + 1 \geq \widetilde{R}_{\varphi}(u,v),  && u, v \in U.\\
\end{aligned}
\end{equation}

\begin{definition}
Loss function $L$ is \textit{symmetric} if $L(y, \hat{y}) = L(\hat{y}, y)$.
\end{definition}
It is easy to verify that $L_{MSE,\varphi}$ and $L_{MAE, \varphi}$ are symmetric loss functions, while $L_{p, \varphi}$ for $p \neq \frac{1}{2}$ is not. However, it can be observed that $L_{p, \varphi}(y, \hat{y}) = L_{1 - p, \varphi}(\hat{y}, y)$.

\begin{definition}
\label{def:v-type}
We say that loss function $L$ is \textit{of $\lor$-type} if for any real number $a$, it holds that
\begin{itemize}
    \item $L(a,a) = 0$,
    \item functions $L(x,a)$ and $L(a,x)$ are increasing for $x > a$ and
    \item functions $L(x,a)$ and $L(a,x)$ are decreasing for $x < a$.
\end{itemize}
\end{definition}
The previous definition says that the loss is greater if $x$ is more distant from $a$. It is easy to verify that the mean squared error and $p$-quantile loss for $p \in (0,1)$ are of $\lor$-type. The $p$-quantile loss for $p \in \{0, 1\}$ is not of $\lor$-type since $L_0(a,x) = 0$ for $x < a$ and $L_1(a, x) = 0$ for $x > a$. 

\begin{definition}
\label{def:perserving_duality}
A loss function $L:[0,1] \times [0,1] \rightarrow \mathbb{R}^+$ is \textit{$N$-duality preserving} if $L(y, \hat{y}) = L(N(\hat{y}), N(y))$ for $N$ from the residual triplet $(T, I, N)$.
\end{definition}
In \cite{palangetic2021novel}, it was shown that both $L_{p, \varphi}$ and $L_{MSE,\varphi}$ are $N$-duality preserving for IMTL triplet $(T_{L, \varphi}, I_{L, \varphi}, N_{L, \varphi})$.

\subsection{Classification case}

In \cite{palangetic2022multi}, binary and multi-class classification cases were considered. In the binary classification case, it is assumed that observed set $\Bar{A}$ from (\ref{eq:general_optimization}) is crisp. It is encoded with a fuzzy set with the same notation in a way that $\Bar{A}(u) = 1$ if $u \in \Bar{A}$ and $\Bar{A}(u) = 0$ if $u \in co\Bar{A}$. 

In such case, for $u \in \Bar{A}, v \in co\Bar{A}$ we have the following property:
\begin{equation}
\label{eq:adjacent_granules}
    \hat{A}(u) = \min_{w \in coA} I(\widetilde{R}(w, u), \hat{A}(w)), \quad \hat{A}(v) = \max_{w \in A} T(\widetilde{R}(v, w), \hat{A}(w)).
\end{equation}

Denote $\beta_u = \hat{A}(u)$ for $u \in A$ and $\beta_u = N(\hat{A}(u))$ for $u \in coA$. We refer to the new notation as an \textit{alternative notation}. While $\hat{A}(u)$ stands for an estimated membership degree of $u$ in $A$, $\beta_u$ is an estimated membership degree of $u$ in the observed class of $u$ (it can be either $A$ or $coA$).

The alternative notation is important to extend the optimization procedure (\ref{eq:general_optimization}) to the multi-class classification case. 

Let $L$ be of $\lor$-type and $N$-dual preserving and symmetric, let $\widetilde{R}(u,v)$ be a $T$-equivalence and let $S$ be crisp equivalence relation on $U$ defined as $S(u,v) = 1$ if $u$ and $v$ are from the same observed decision class, i.e., $\Bar{A}(u) = \Bar{A}(v)$, and $S(u,v)=0$ otherwise.
In this case, problem (\ref{eq:general_optimization}) can be reformulated as:

\begin{equation}
\label{eq:disjoint_optimization}
\begin{aligned}
&\text{minimize}  && \displaystyle\sum_{u \in U} L(1, \beta_u) \\
&\text{subject to} && T(\beta_u, \beta_v) \leq I(\widetilde{R}(u,v), S(u,v))),  \quad   u , v \in U\\
  &              &&0 \leq \beta_u \leq 1, \quad u \in U.
\end{aligned}
\end{equation}
Since relation $S$ can be extended to distinguish among more than only 2 classes, the previous form is also suitable for the multi-class classification problems. Also, the interpretation of the alternative notation is extended in the same way. Value $\beta_u$ stands for the membership degree of $u$ in the observed class of $u$.

As in the case of problem (\ref{eq:general_optimization}), problem (\ref{eq:disjoint_optimization}) can be efficiently solved if $T$ is isomorphic to $T_L$ and for $L$ being $L_{MSE, \varphi}$ or $L_{MAE, \varphi}$. In the case of $L_{MAE, \varphi}$, problem (\ref{eq:disjoint_optimization}) becomes a linear program:

\begin{equation}
\label{eq:T_L_linear_multiclass}
\begin{array}{ll@{}ll}
&\text{maximize}  && \displaystyle\sum_{u \in U}  \alpha_u \\
&\text{subject to}    &&\alpha_u + \alpha_v \leq 1 + M_{\varphi}(u,v), \quad  u, v \in U\\
  &              && 0 \leq \alpha_u \leq 1, \quad u \in U,
\end{array}
\end{equation}

while in the case of $L_{MSE, \varphi}$ we have a quadratic program:
\begin{equation}
\label{eq:T_L_quadratic_multiclass}
\begin{array}{ll@{}ll}
&\text{maximize}  && \displaystyle\sum_{u \in U} (1 - \alpha_u)^2\\
&\text{subject to}    &&\alpha_u + \alpha_v \leq 1 + M_{\varphi}(u,v), \quad  u, v \in U\\
  &              && 0 \leq \alpha_u \leq 1, \quad u \in U.
\end{array}
\end{equation}
In the following section, we assume that we deal with a classification problem. First, we will start with the binary case and then extend it to the multi-class case. 

\section{Prediction for unseen objects}
\label{sec:making_predictions}

In this section, we discuss how to classify a set of unseen instances $U^{\dagger}$ using optimization problem (\ref{eq:general_optimization}). 

\subsection{Binary classification}
 
 In this case, we need to assign a membership degree of instances from $U^{\dagger}$ to a set $A$, where $A$ refers to one of the classes. Solving optimization procedure (\ref{eq:general_optimization}) does not return an explicit prediction function $f: U \rightarrow [0,1]$ which would assign a membership degree to any new and unseen instance from $U^{\dagger}$. However, the membership degree of any new instance has to satisfy the constraints from (\ref{eq:general_optimization}).

Let $u^{\dagger} \in U^{\dagger}$. The aim is to estimate the membership degree $\hat{A}(u^{\dagger})$. Since unseen objects are represented with condition attributes, the relations $\Tilde{R}(u^{\dagger}, u)$ and $\Tilde{R}(u, u^{\dagger})$ can be calculated for all $u \in U$ and therefore, we assume that they are known. From the constraints of (\ref{eq:general_optimization}), we conclude that the conditions:
$$
\forall u \in U; \, T(\widetilde{R}(u^{\dagger}, u), \hat{A}(u)) \leq \hat{A}(u^{\dagger}),
$$
and
$$
\forall u \in U; \, T(\widetilde{R}(u, u^{\dagger}), \hat{A}(u^{\dagger})) \leq \hat{A}(u) \Leftrightarrow \forall u \in U; \, \hat{A}(u^{\dagger}) \leq I(\widetilde{R}(u, u^{\dagger}), \hat{A}(u)),
$$
have to be satisfied. The previous conditions can be rewritten as:

\begin{equation}
\label{eq:bounds}
    \max_{u \in U} T(\widetilde{R}(u^{\dagger}, u), \hat{A}(u)) \leq \hat{A}(u^{\dagger}) \leq \min_{u \in U} I(\widetilde{R}(u, u^{\dagger}), \hat{A}(u)).
\end{equation}

Expression (\ref{eq:bounds}) determines a lower and an upper bound for membership degree $\hat{A}(u^{\dagger})$ which forms an interval to which the degree should belong. First, we have to show that the interval is well defined.

\begin{proposition}
\label{prop:well_defined}
For any $u^{\dagger} \in U^{\dagger}$, it holds that 
$$
\max_{u \in U} T(\widetilde{R}(u^{\dagger}, u), \hat{A}(u)) \leq \min_{u \in U} I(\widetilde{R}(u, u^{\dagger}), \hat{A}(u)).
$$
\end{proposition}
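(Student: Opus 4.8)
The plan is to reduce the claimed inequality between a maximum and a minimum to a family of pointwise inequalities and then discharge each of them using residuation, associativity of $T$, transitivity of $\widetilde{R}$, and the granular representability constraints satisfied by $\hat{A}$. Concretely, since the left-hand side is a maximum over $u \in U$ and the right-hand side a minimum over $w \in U$, it suffices to fix arbitrary $u, w \in U$ and prove
$$
T(\widetilde{R}(u^{\dagger}, u), \hat{A}(u)) \leq I(\widetilde{R}(w, u^{\dagger}), \hat{A}(w)).
$$
By the residuation property (available because $T = T_{L,\varphi}$ is left-continuous and $I$ is its R-implicator), this is equivalent to
$$
T\big(\widetilde{R}(w, u^{\dagger}),\, T(\widetilde{R}(u^{\dagger}, u), \hat{A}(u))\big) \leq \hat{A}(w).
$$

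The next step is to regroup the left-hand side using commutativity and associativity of $T$ as $T\big(T(\widetilde{R}(w, u^{\dagger}), \widetilde{R}(u^{\dagger}, u)),\, \hat{A}(u)\big)$, and then apply $T$-transitivity of $\widetilde{R}$, which gives $T(\widetilde{R}(w, u^{\dagger}), \widetilde{R}(u^{\dagger}, u)) \leq \widetilde{R}(w, u)$. Monotonicity of $T$ in its first argument then yields
$$
T\big(T(\widetilde{R}(w, u^{\dagger}), \widetilde{R}(u^{\dagger}, u)),\, \hat{A}(u)\big) \leq T(\widetilde{R}(w, u), \hat{A}(u)).
$$
Finally, since $\hat{A}$ is a feasible solution of (\ref{eq:general_optimization}), the constraint $T(\widetilde{R}(w, u), \hat{A}(u)) \leq \hat{A}(w)$ holds for all $w, u \in U$, and chaining the inequalities gives exactly $T\big(\widetilde{R}(w, u^{\dagger}), T(\widetilde{R}(u^{\dagger}, u), \hat{A}(u))\big) \leq \hat{A}(w)$, as required. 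Passing back through residuation and then taking the maximum over $u$ and the minimum over $w$ completes the argument.

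The one point that needs care — and the main (mild) obstacle — is that the transitivity step is applied to a triple whose middle element is the \emph{new} object $u^{\dagger} \in U^{\dagger}$, not an element of $U$. So one must note that $\widetilde{R}$ is really defined on the whole attribute space (e.g.\ via (\ref{eq: triangular similarity}) or (\ref{eq:general_equivalence})) and is a $T$-preorder there, hence $T$-transitive on $U \cup U^{\dagger}$; the reflexivity and symmetry of $\widetilde{R}$ are not needed here, only $T$-transitivity. Everything else is a routine manipulation of the residual-triplet identities recalled in the preliminaries.
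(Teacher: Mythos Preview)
Your proof is correct and follows essentially the same route as the paper's: reduce the max--min inequality to a pointwise statement for arbitrary $u,w\in U$, pass through residuation, regroup via associativity, apply $T$-transitivity of $\widetilde{R}$ through the intermediate point $u^{\dagger}$, and finish with the granular-representability constraint on $\hat{A}$. Your explicit remark that $T$-transitivity must hold on $U\cup U^{\dagger}$ (not just on $U$) is a point the paper uses tacitly.
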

\begin{proof}
An equivalent formulation of the demonstrandum is:
\begin{equation}
\label{eq:bounds_well_defined2}
\forall u,v \in U, \, T(\widetilde{R}(u^{\dagger}, u), \hat{A}(u)) \leq  I(\widetilde{R}(v, u^{\dagger}), \hat{A}(v)).
\end{equation}
Using granular representability, $T$-transitivity and associativity of $T$, we have that
\begin{align*}
    \hat{A}(v) &\geq T(\widetilde{R}(v, u), \hat{A}(u)) \\
        &\geq T(T(\widetilde{R}(v, u^{\dagger}), \widetilde{R}(u^{\dagger}, u)), \hat{A}(u)) \\
        &= T(\widetilde{R}(v, u^{\dagger}), T(\widetilde{R}(u^{\dagger}, u), \hat{A}(u)).
\end{align*}
The latter is equivalent to the formulation of the proposition due to the residuation property. 
\end{proof}
Since the interval is well defined, the next step is to properly aggregate the lower and upper bounds into one value. Denote
\begin{align}
\label{eq:bounds_definitions}
\underline{\hat{A}}(u^{\dagger}) =\max_{u \in U} T(\widetilde{R}(u^{\dagger}, u), \hat{A}(u)), \quad
\overline{\hat{A}}(u^{\dagger}) = \min_{u \in U} I(\widetilde{R}(u, u^{\dagger}), \hat{A}(u)).
\end{align}
Since $\hat{A}(u^{\dagger})$ represents the predicted membership degree of $u^{\dagger}$ to $A$, then $N(\hat{A}(u^{\dagger}))$ represents the membership degree to $coA$. Let $\mathbbm{A}$ be an averaging operator. We construct the prediction of the membership degree of $u^{\dagger} \in A$ as 
\begin{equation}
\label{eq:averaging_prediction}
\hat{A}(u^{\dagger}) = \mathbbm{A}(\underline{\hat{A}}(u^{\dagger}), \overline{\hat{A}}(u^{\dagger})).
\end{equation}

The next question is how to construct the averaging operator $\mathbbm{A}$. If (\ref{eq:averaging_prediction}) holds, then some sort of duality should also hold, i.e.,
\begin{equation}
\label{eq:dual_averaging}
N(\hat{A}(u^{\dagger})) = \mathbbm{A}(co\underline{\hat{A}}(u^{\dagger}), co\overline{\hat{A}}(u^{\dagger})),
\end{equation}
where
\begin{align*}
&co\underline{\hat{A}}(u^{\dagger}) = \max_{u \in U} T(\widetilde{R}(u, u^{\dagger}), N(\hat{A}(u))), \\ 
&co\overline{\hat{A}}(u^{\dagger}) = \min_{u \in U} I(\widetilde{R}(u^{\dagger}, u), N(\hat{A}(u))).
\end{align*}
We have the following result.
\begin{proposition}
\label{prop:prediction_duality}
For every $u^{\dagger} \in U^{\dagger}$, it holds that
\begin{equation*}
co\overline{\hat{A}}(u^{\dagger}) = N(\underline{\hat{A}}(u^{\dagger})), \quad 
co\underline{\hat{A}}(u^{\dagger}) = N(\overline{\hat{A}}(u^{\dagger})).
\end{equation*}
\end{proposition}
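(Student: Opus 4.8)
The plan is to unwind the four definitions in (\ref{eq:bounds_definitions}) and of $co\underline{\hat{A}}, co\overline{\hat{A}}$, push the negator $N$ through the inner fuzzy-logic operations using the interaction identities recalled in the Preliminaries, and then use that an involutive (hence decreasing) negator converts a finite minimum into a maximum and vice versa.

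First I would handle $co\overline{\hat{A}}(u^{\dagger}) = \min_{u \in U} I(\widetilde{R}(u^{\dagger}, u), N(\hat{A}(u)))$. By property (\ref{eq:implicator_negator_property2}) we have, for each $u \in U$, that $I(\widetilde{R}(u^{\dagger}, u), N(\hat{A}(u))) = N\bigl(T(\widetilde{R}(u^{\dagger}, u), \hat{A}(u))\bigr)$. Since $N$ is decreasing and $U$ is finite, $\min_{u} N(z_u) = N\bigl(\max_{u} z_u\bigr)$; applying this with $z_u = T(\widetilde{R}(u^{\dagger}, u), \hat{A}(u))$ yields $co\overline{\hat{A}}(u^{\dagger}) = N\bigl(\max_{u} T(\widetilde{R}(u^{\dagger}, u), \hat{A}(u))\bigr) = N(\underline{\hat{A}}(u^{\dagger}))$, which is the first claimed equality.

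Next I would handle $co\underline{\hat{A}}(u^{\dagger}) = \max_{u \in U} T(\widetilde{R}(u, u^{\dagger}), N(\hat{A}(u)))$. Here I would invoke the IMTL identity (\ref{eq:implicator_negator_property4}), $T(x, N(y)) = N(I(x,y))$, to rewrite each term as $T(\widetilde{R}(u, u^{\dagger}), N(\hat{A}(u))) = N\bigl(I(\widetilde{R}(u, u^{\dagger}), \hat{A}(u))\bigr)$, and then, again using that a decreasing $N$ on a finite index set satisfies $\max_{u} N(z_u) = N\bigl(\min_{u} z_u\bigr)$, conclude $co\underline{\hat{A}}(u^{\dagger}) = N\bigl(\min_{u} I(\widetilde{R}(u, u^{\dagger}), \hat{A}(u))\bigr) = N(\overline{\hat{A}}(u^{\dagger}))$.

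There is no real obstacle: the argument is a direct substitution using already-listed properties. The only point deserving a line of care is the legitimacy of exchanging $N$ with $\min$ and $\max$ — for a decreasing $N$ one has $N(\max_u z_u) \le N(z_u)$ for every $u$, hence $N(\max_u z_u) \le \min_u N(z_u)$, while the maximum is attained (finiteness of $U$) so equality holds, and the dual statement is symmetric. As an alternative one could derive the second identity from the first by substituting $co\hat{A}$ for $\hat{A}$ and invoking involutivity of $N$, but the direct route above is shorter.
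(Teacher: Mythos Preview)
Your argument is correct and is essentially identical to the paper's own proof: both push $N$ through $\min/\max$ and then invoke (\ref{eq:implicator_negator_property2}) for the first identity and (\ref{eq:implicator_negator_property4}) for the second. The only cosmetic difference is that the paper starts from $N(\underline{\hat{A}}(u^{\dagger}))$ and $N(\overline{\hat{A}}(u^{\dagger}))$ and works toward the $co$-expressions, whereas you start from the $co$-expressions; the steps and the cited properties are the same.
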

\begin{proof}
For the left equality, we have that
\begin{align*}
    N(\underline{\hat{A}}(u^{\dagger})) &= N( \max_{u \in U} T(\widetilde{R}(u^{\dagger}, u), \hat{A}(u))) \\
    &= \min_{u \in U} N(T(\widetilde{R}(u^{\dagger}, u), \hat{A}(u))) \\
    &= \min_{u \in U} I(\widetilde{R}(u^{\dagger}, u), N(\hat{A}(u))) = co\overline{\hat{A}}(u^{\dagger}).
\end{align*}
The third equality holds from (\ref{eq:implicator_negator_property2}).
For the right equality, we have that
\begin{align*}
    N(\overline{\hat{A}}(u^{\dagger})) &= N( \min_{u \in U} I(\widetilde{R}(u, u^{\dagger}), \hat{A}(u))) \\
    &= \max_{u \in U} N(I(\widetilde{R}(u, u^{\dagger}), \hat{A}(u))) \\
    &= \max_{u \in U} T(\widetilde{R}(u, u^{\dagger}), N(\hat{A}(u))) = co\underline{\hat{A}}(u^{\dagger}).
\end{align*}
The third equality holds from (\ref{eq:implicator_negator_property4}).
\end{proof}
Following Proposition \ref{prop:prediction_duality}, we conclude that for an aggregation operator $\mathbbm{A}$, it should hold that $N(\hat{A}(u^{\dagger})) = \mathbbm{A}(N(\hat{A}(u^{\dagger})), N(\hat{A}(u^{\dagger})))$, i.e., it is sufficient that $\mathbbm{A}$ is $N$-invariant. 

For an involutive negator $N$, let $\varphi_N$ be an isomorphism between $N$ and $N_s$, i.e., $N=\varphi_N^{-1}(N_s(\varphi_N))$. We define an averaging operator:
\begin{equation}
\label{eq:new_aggregation}    
\mathbbm{A}_N(x,y) = \varphi^{-1}_{N} \left ( \frac{\varphi_N(x) + \varphi_N(y)}{2} \right ).
\end{equation}
It is easily verifiable that $\mathbbm{A}_N$ is indeed $N$-invariant. 

Therefore, we predict the membership degree of $u^{\dagger}$ as $\hat{A}(u^{\dagger}) = \mathbbm{A}_N(\underline{\hat{A}}(u^{\dagger}), \overline{\hat{A}}(u^{\dagger})).
$
After obtaining the predicted membership degree, we have to defuzzify it, i.e., to obtain a crisp binary prediction. We return prediction $1$, i.e., $u^{\dagger}$ belongs to decision $A$ if $\hat{A}(u^{\dagger}) > N(\hat{A}(u^{\dagger}))$, and prediction 0 otherwise. Please note that when $\hat{A}(u^{\dagger}) = N(\hat{A}(u^{\dagger}))$, we have a tie and any prediction can be assigned. However, we will assign prediction $0$ in order to keep the deterministic nature of the prediction model. The condition
$\hat{A}(u^{\dagger}) > N(\hat{A}(u^{\dagger}))$ can be rewritten as
\begin{align*}
    \hat{A}(u^{\dagger}) > N(\hat{A}(u^{\dagger})) &\Leftrightarrow \hat{A}(u^{\dagger}) > \varphi_N^{-1}(1 - \varphi_N(\hat{A}(u^{\dagger}))) \\
    &\Leftrightarrow \varphi_N(\hat{A}(u^{\dagger})) > 1 - \varphi_N(\hat{A}(u^{\dagger})) \\
    &\Leftrightarrow \varphi_N(\hat{A}(u^{\dagger})) > \frac{1}{2} \Leftrightarrow \hat{A}(u^{\dagger}) > \varphi^{-1}_N  (0.5).
\end{align*}
We obtain that value $\varphi^{-1}_N  (0.5)$ is the threshold that determines the decision. 

In order to speed up the calculation, we can use the following 
proposition. 
\begin{proposition}
\label{prop:classification_redefinition}
In the binary classification case, it holds that
\begin{align}
    \underline{\hat{A}}(u^{\dagger}) =\max_{u \in \Bar{A}} T(\widetilde{R}(u^{\dagger}, u), \hat{A}(u)), \quad
    \overline{\hat{A}}(u^{\dagger}) = \min_{u \in co\Bar{A}} I(\widetilde{R}(u, u^{\dagger}), \hat{A}(u)).
\end{align}
\end{proposition}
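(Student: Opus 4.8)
The plan is to show that in each of the two expressions in (\ref{eq:bounds_definitions}), the extremum over all of $U$ is attained at an element of the "opposite" class, so that the restriction indicated in the statement does not change the value. I will treat the two equalities separately but symmetrically, exploiting granular representability of $\hat{A}$ together with the crisp encoding $\Bar{A}(u) \in \{0,1\}$ and the fact (recalled in (\ref{eq:adjacent_granules})) that for $u \in \Bar{A}$ the value $\hat{A}(u)$ is already the minimum of $I(\widetilde{R}(w,u),\hat{A}(w))$ over $w \in co\Bar{A}$, and dually for $v \in co\Bar{A}$.

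For the lower bound, fix $u^{\dagger} \in U^{\dagger}$ and consider $\max_{u \in U} T(\widetilde{R}(u^{\dagger},u),\hat{A}(u))$. It suffices to show that for every $w \in co\Bar{A}$ there is some $u \in \Bar{A}$ with $T(\widetilde{R}(u^{\dagger},w),\hat{A}(w)) \leq T(\widetilde{R}(u^{\dagger},u),\hat{A}(u))$, so that dropping the elements of $co\Bar{A}$ from the domain of the $\max$ cannot decrease it. By (\ref{eq:adjacent_granules}) applied at $w \in co\Bar{A}$, $\hat{A}(w) = \max_{u \in \Bar{A}} T(\widetilde{R}(w,u),\hat{A}(u))$, so pick $u \in \Bar{A}$ attaining this maximum. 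Then using monotonicity and associativity of $T$ together with $T$-transitivity of $\widetilde{R}$,
\[
T(\widetilde{R}(u^{\dagger},w),\hat{A}(w)) = T(\widetilde{R}(u^{\dagger},w), T(\widetilde{R}(w,u),\hat{A}(u))) = T(T(\widetilde{R}(u^{\dagger},w),\widetilde{R}(w,u)),\hat{A}(u)) \leq T(\widetilde{R}(u^{\dagger},u),\hat{A}(u)),
\]
which is what we need. Hence $\underline{\hat{A}}(u^{\dagger}) = \max_{u \in \Bar{A}} T(\widetilde{R}(u^{\dagger},u),\hat{A}(u))$.

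For the upper bound, the argument is dual: fix $u^{\dagger}$ and consider $\min_{u \in U} I(\widetilde{R}(u,u^{\dagger}),\hat{A}(u))$; it suffices to show that for every $w \in \Bar{A}$ there is some $u \in co\Bar{A}$ with $I(\widetilde{R}(u,u^{\dagger}),\hat{A}(u)) \leq I(\widetilde{R}(w,u^{\dagger}),\hat{A}(w))$. By (\ref{eq:adjacent_granules}) applied at $w \in \Bar{A}$, $\hat{A}(w) = \min_{u \in co\Bar{A}} I(\widetilde{R}(u,w),\hat{A}(u))$; pick $u \in co\Bar{A}$ attaining this minimum. Then, using non-increasingness of $I$ in its first argument, $T$-transitivity $T(\widetilde{R}(w,u^{\dagger}),\widetilde{R}(u^{\dagger},u)) \leq \widetilde{R}(w,u)$ hence $\widetilde{R}(u^{\dagger},u) \leq I(\widetilde{R}(w,u^{\dagger}),\widetilde{R}(w,u))$ (residuation), and property (\ref{eq:t_norm_implicator_property}) $I(T(x,y),z) = I(x,I(y,z))$, one gets
\[
I(\widetilde{R}(u,u^{\dagger}),\hat{A}(u)) \leq I(\widetilde{R}(u,u^{\dagger}), I(\widetilde{R}(u,w),\hat{A}(w))) = I(T(\widetilde{R}(u,w),\widetilde{R}(u,u^{\dagger})\,?\,),\hat{A}(w)),
\]
and after arranging the transitivity chain correctly this is $\leq I(\widetilde{R}(w,u^{\dagger}),\hat{A}(w))$; I would carry this manipulation out carefully to fix the exact order of arguments. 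The main obstacle is precisely this bookkeeping in the upper-bound case: unlike the lower bound, where associativity of $T$ makes the chain transparent, here I must thread the $T$-transitivity inequality through the residuation property and the shift law (\ref{eq:t_norm_implicator_property}) in the right order, and keep track of which argument slots of $\widetilde{R}$ are which (note $\widetilde{R}$ need not be symmetric). Once both inclusions of domains are justified, the claimed equalities follow immediately.
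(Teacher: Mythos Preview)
Your overall strategy is exactly the paper's: use (\ref{eq:adjacent_granules}) to replace any extremizer from the ``wrong'' class by one from the right class, via associativity and $T$-transitivity. The lower-bound half is correct as written and matches the paper line for line.

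The upper-bound half, however, is set up backwards. You try to bound $I(\widetilde{R}(u,u^{\dagger}),\hat{A}(u))$ from above by substituting for $\hat{A}(u)$, but the identity you actually have from (\ref{eq:adjacent_granules}) is $\hat{A}(w)=I(\widetilde{R}(u,w),\hat{A}(u))$, which expresses $\hat{A}(w)$, not $\hat{A}(u)$. Your displayed first inequality would require $\hat{A}(u)\le I(\widetilde{R}(u,w),\hat{A}(w))$, whereas granular representability only gives $\hat{A}(u)\le I(\widetilde{R}(w,u),\hat{A}(w))$ (arguments of $\widetilde{R}$ reversed, and you yourself note $\widetilde{R}$ need not be symmetric); and even granting that step, the $T$-transitivity inequality you would then need points the wrong way. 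The clean fix---and this is precisely what the paper does---is to substitute into the \emph{other} term:
\[
I(\widetilde{R}(w,u^{\dagger}),\hat{A}(w))
= I\bigl(\widetilde{R}(w,u^{\dagger}),\, I(\widetilde{R}(u,w),\hat{A}(u))\bigr)
= I\bigl(T(\widetilde{R}(u,w),\widetilde{R}(w,u^{\dagger})),\,\hat{A}(u)\bigr)
\ge I(\widetilde{R}(u,u^{\dagger}),\hat{A}(u)),
\]
using (\ref{eq:t_norm_implicator_property}) together with commutativity of $T$ for the second equality, and $T$-transitivity $T(\widetilde{R}(u,w),\widetilde{R}(w,u^{\dagger}))\le \widetilde{R}(u,u^{\dagger})$ with antitonicity of $I$ in its first argument for the inequality. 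No residuation detour is needed, and no argument-slot ambiguity arises.
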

\begin{proof}
An equivalent formulation of the demonstrandum, which holds from the granularity property, is
\begin{align}
\label{eq:adjacent_granules2}
\exists u \in \Bar{A}; \, \underline{\hat{A}}(u^{\dagger}) = T(\widetilde{R}(u^{\dagger}, u), \hat{A}(u)), \quad \exists u \in co\Bar{A}; \, \overline{\hat{A}}(u^{\dagger}) = I(\widetilde{R}(u, u^{\dagger}), \hat{A}(u)).
\end{align}
We prove the first equality from (\ref{eq:adjacent_granules2}). If the maximum from (\ref{eq:bounds_definitions}) is achieved for some $u \in \Bar{A}$, the equality is true. Otherwise, we assume that for some $u \in co\Bar{A}$, it holds that
$$
\underline{\hat{A}}(u^{\dagger}) =T(\widetilde{R}(u^{\dagger}, u), \hat{A}(u)).
$$
From Proposition \ref{eq:adjacent_granules}, there exists some $v \in \Bar{A}$ such that $\hat{A}(u) = T(\widetilde{R}(u, v), \Bar{A}(v))$. We have that
\begin{align*}
    \underline{\hat{A}}(u^{\dagger}) &=T(\widetilde{R}(u^{\dagger}, u), \hat{A}(u)) \\
        &= T(\widetilde{R}(u^{\dagger}, u), T(\widetilde{R}(u, v), \Bar{A}(v))) \\
        &= T(T(\widetilde{R}(u^{\dagger}, u), \widetilde{R}(u, v)), \Bar{A}(v)) \\
        &\leq T(\widetilde{R}(u^{\dagger}, v), \Bar{A}(v)).
\end{align*}
The inequality holds from the $T$-transitivity property. The opposite inequality holds from the granularity property.

For the second inequality from (\ref{eq:adjacent_granules2}), assume that the minimum from (\ref{eq:bounds_definitions}) is achieved for some $u \in \Bar{A}$. It holds that
$$
\overline{\hat{A}}(u^{\dagger}) =I(\widetilde{R}(u, u^{\dagger}), \hat{A}(u)).
$$

From Proposition \ref{eq:adjacent_granules}, we find that there exists some $v \in co\Bar{A}$ such that $\hat{A}(u) = I(\widetilde{R}(v, u), \Bar{A}(v))$. We have that
\begin{align*}
    \overline{\hat{A}}(u^{\dagger}) &=I(\widetilde{R}(u, u^{\dagger}), \hat{A}(u)) \\
        &= I(\widetilde{R}(u, u^{\dagger}), I(\widetilde{R}(v, u), \Bar{A}(v))) \\
        &= I(T(\widetilde{R}(v, u), \widetilde{R}(u, u^{\dagger})), \Bar{A}(v)) \\
        &\geq I(\widetilde{R}(v, u^{\dagger}), \Bar{A}(v)).
\end{align*}
The third equality holds because of (\ref{eq:t_norm_implicator_property}), while the inequality follows from the $T$-transitivity property. The opposite inequality holds from the granularity property which completes the proof.
\end{proof}

\subsection{Multi-class classification}

In the multi-class classification case, we assume that we have $K$ decision classes denoted with $A_1, \dots, A_K$. Let $\Bar{A}_1, \dots, \Bar{A}_K$ be observed decision classes from $U$ that are pairwise disjoint and which union is equal to $U$. Then, for $u, v \in U$, relation $S$ from \ref{eq:disjoint_optimization} is defined as $S(u, v) = 1$ if $\exists k \in \{1, \dots, K\}$ such that $u \in \Bar{A}_k \land v \in \Bar{A}_k$ and $S(u,v) = 0$ otherwise. Let $\beta_u$ be a solution of (\ref{eq:disjoint_optimization}) with such $S$. We want to estimate the membership values of an object $u \in U$ in class $k$ denoted with $\hat{A}_k(u)$. From the interpretation of $\beta_u$, we have that $\hat{A}_k(u) = \beta_u$ if $u \in \Bar{A}_k$. If $u \notin  \Bar{A}_k$, we use the right expression of (\ref{eq:adjacent_granules}) to obtain $\hat{A}_k(u)$ i.e.,
$$
\hat{A}_k(u) = \max_{v \in \Bar{A}_k} T(\widetilde{R}(u, v), \hat{A}_k(v)) = \max_{v \in \Bar{A}_k} T(\widetilde{R}(u, v), \beta_v).
$$

Using the same reasoning as in the binary classification, for $k \in \{1, \dots, K\}$ and using Proposition \ref{prop:classification_redefinition}, a lower and upper bounds of a membership degree of $u^{\dagger}$ in $A_k$ is defined as
\begin{equation*}
\underline{\hat{A}}_k(u^{\dagger}) =\max_{u \in \Bar{A}_k} T(\widetilde{R}(u^{\dagger}, u), \hat{A}_k(u)), \quad 
    \overline{\hat{A}}_k(u^{\dagger}) = \min_{u \in U-\Bar{A}_k} I(\widetilde{R}(u, u^{\dagger})), \hat{A}_k(u)),
\end{equation*}
while the prediction of the membership degree is obtained using averaging operator (\ref{eq:new_aggregation}). The decision class is then determined using formula:
$$
decision(u^{\dagger}) = \argmax_{k \in \{1, \dots, K\}} \mathbbm{A}(\underline{\hat{A}}_k(u^{\dagger}), \overline{\hat{A}}_k(u^{\dagger})).
$$

\subsection{Soft minimum and maximum}
From (\ref{eq:bounds}) and (\ref{eq:averaging_prediction}) we observe that the prediction of the membership degree of $u^{\dagger}$ is obtained based on the extreme values, i.e., the maximum from the left inequality and the minimum from the right inequality. In order to utilize more non-extreme values, we replace $\max$ and $\min$ with OWA operators. One motivation in using OWA operators and softening minimum and maximum in general is to reduce the influence of possible outliers in the dataset. The extreme values may correspond to outliers which makes the predicted membership degree unreliable. Hence, we would like to explore if using OWA operators will increase the performance of the classification model. 

For some weights $W_L$ and $W_U$ that correspond to soft $\min$ and $\max$ operators respectively, we have the following definitions:

\begin{align*}
\label{eq:bounds_definitions}
\underline{\hat{A}}^{W_U}(u^{\dagger}) =OWA_{W_U}\{T(\widetilde{R}(u^{\dagger}, u), \hat{A}(u)) ; u \in U \}, \\
\overline{\hat{A}}^{W_L}(u^{\dagger}) = OWA_{W_L}\{I(\widetilde{R}(u, u^{\dagger}), \hat{A}(u)) ; u \in U \},
\end{align*}
while the estimated membership is obtained in the same way as in (\ref{eq:averaging_prediction}).
From the definition of OWA, for all $u^{\dagger} \in U^{\dagger}$ it holds that
$$
\underline{\hat{A}}^{W_U}(u^{\dagger}) \leq \underline{\hat{A}}(u^{\dagger}), \quad \overline{\hat{A}}^{W_L}(u^{\dagger}) \geq  \overline{\hat{A}}(u^{\dagger}),
$$
which further implies that $\underline{\hat{A}}^{W_U}(u^{\dagger}) \leq \overline{\hat{A}}^{W_L}(u^{\dagger})$, i.e., the bounds are well-defined. 

The next question is if the duality expressed in analogous form as in (\ref{eq:dual_averaging}) and for $N$-invariant averaging operator $\mathbbm{A}$ will hold for
\begin{align*}
co\underline{\hat{A}}^{W_U}(u^{\dagger}) = OWA_{W_U}\{ T(\widetilde{R}(u, u^{\dagger}), N(\hat{A}(u)));  u \in U\}, \\ 
co\overline{\hat{A}}^{W_L}(u^{\dagger}) = OWA_{W_L} \{ I(\widetilde{R}(u^{\dagger}, u), N(\hat{A}(u))); u \in U \}.
\end{align*}
If we consider the proof of Proposition \ref{prop:prediction_duality}, we conclude that the answer to the previous question depends on whether OWA operators and negator $N$ are interchangeable. This is not always the case, but we do have the following proposition.
\begin{proposition}
\label{prop:prediction_duality_owa}
\end{proposition}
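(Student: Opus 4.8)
The plan is to reuse the structure of the proof of Proposition~\ref{prop:prediction_duality}, replacing its only real ingredient --- that $N$ commutes with $\min$ and $\max$ --- by the analogous statement for OWA operators. Concretely, I intend to prove that if $N = N_s$ and the two weight vectors are complementary, $W_U = \overline{W_L}$, then for every $u^{\dagger} \in U^{\dagger}$
$$
co\overline{\hat{A}}^{W_L}(u^{\dagger}) = N(\underline{\hat{A}}^{W_U}(u^{\dagger})), \qquad co\underline{\hat{A}}^{W_U}(u^{\dagger}) = N(\overline{\hat{A}}^{W_L}(u^{\dagger})),
$$
from which the full duality in the form of~(\ref{eq:dual_averaging}) for $\mathbbm{A}_N$ follows exactly as in the $\min/\max$ case.

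First I would isolate the key auxiliary fact: for any finite family $\{a_u ; u \in U\} \subseteq [0,1]$ with $|U| = n$, if $W_U = \overline{W_L}$ then $N_s(OWA_{W_U}\{a_u ; u \in U\}) = OWA_{W_L}\{N_s(a_u) ; u \in U\}$. The argument is short: $N_s$ is strictly decreasing and affine, so if $a_{(1)} \geq \cdots \geq a_{(n)}$ denote the order statistics of $\{a_u\}$, then $N_s(a_{(n)}) \geq \cdots \geq N_s(a_{(1)})$ are the order statistics of $\{N_s(a_u)\}$; hence the $i$-th largest element of $\{N_s(a_u)\}$ is $N_s(a_{(n-i+1)})$, and
$$
OWA_{W_L}\{N_s(a_u)\} = \sum_{i=1}^{n}(W_L)_i\, N_s(a_{(n-i+1)}) = \sum_{j=1}^{n}(W_L)_{n-j+1}\, N_s(a_{(j)}) = \sum_{j=1}^{n}(W_U)_j\,(1-a_{(j)}) = N_s(OWA_{W_U}\{a_u\}),
$$
the third equality being exactly the complementarity $(W_L)_{n-j+1} = (W_U)_j$.

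Given this lemma, the proposition follows just as Proposition~\ref{prop:prediction_duality} did. For the first equality, apply the lemma with $a_u = T(\widetilde{R}(u^{\dagger}, u), \hat{A}(u))$ and rewrite each resulting term through~(\ref{eq:implicator_negator_property2}): $N(\underline{\hat{A}}^{W_U}(u^{\dagger})) = OWA_{W_L}\{N(T(\widetilde{R}(u^{\dagger}, u), \hat{A}(u)))\} = OWA_{W_L}\{I(\widetilde{R}(u^{\dagger}, u), N(\hat{A}(u)))\} = co\overline{\hat{A}}^{W_L}(u^{\dagger})$. For the second, apply the lemma in the direction $W_L = \overline{W_U}$ with $a_u = I(\widetilde{R}(u, u^{\dagger}), \hat{A}(u))$ and use~(\ref{eq:implicator_negator_property4}), which holds since we are in an IMTL triplet: $N(\overline{\hat{A}}^{W_L}(u^{\dagger})) = OWA_{W_U}\{N(I(\widetilde{R}(u, u^{\dagger}), \hat{A}(u)))\} = OWA_{W_U}\{T(\widetilde{R}(u, u^{\dagger}), N(\hat{A}(u)))\} = co\underline{\hat{A}}^{W_U}(u^{\dagger})$.

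The main obstacle --- and the source of the earlier caveat that this ``is not always the case'' --- is the auxiliary lemma itself: an OWA operator is a convex combination, so it commutes with a negator only if that negator is affine on $[0,1]$. The standard negator is affine, which is why $N = N_s$ suffices, whereas a general involutive $N_{L,\varphi}$ is not, so $N(\sum_i w_i a_{(i)}) \neq \sum_i w_i N(a_{(i)})$ in general and the chain above breaks. (One could recover the general case by defining the OWA aggregation in the $\varphi_N$-transformed scale, but with OWA on the plain $[0,1]$ scale as used here the hypothesis $N = N_s$ is essential.) Everything else is routine bookkeeping; the only subtlety is to invoke complementarity with the correct orientation in the two halves --- turning a $W_U$-OWA into a $W_L$-OWA in the first equality and a $W_L$-OWA into a $W_U$-OWA in the second --- which is harmless since $W_U = \overline{W_L} \Leftrightarrow W_L = \overline{W_U}$.
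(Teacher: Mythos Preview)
Your proof is correct and follows essentially the same approach as the paper: both arguments exploit that the standard negator is affine, so it can be pulled inside the OWA weighted sum, after which the complementarity $(W_L)_{n-j+1}=(W_U)_j$ matches the reversed ordering of the negated terms, and property~(\ref{eq:implicator_negator_property2}) converts $N(T(\cdot,\cdot))$ into $I(\cdot,N(\cdot))$. The only cosmetic differences are that you isolate the ``$N_s$ commutes with complementary OWA'' step as a standalone lemma and prove the second equality explicitly via~(\ref{eq:implicator_negator_property4}) (valid since $N=N_s$ makes the triplet IMTL), whereas the paper computes inline and dismisses the second equality as analogous.
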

Let $(T, I, N)$ be a residual triplet for which $N$ is the standard negator and let $W_U$ and $W_L$ be complementary vectors of weights. Then, it holds that
$$
co\overline{\hat{A}}^{W_L}(u^{\dagger}) = N(\underline{\hat{A}}^{W_U}(u^{\dagger})), \quad 
co\underline{\hat{A}}^{W_U}(u^{\dagger}) = N(\overline{\hat{A}}^{W_L}(u^{\dagger})).
$$
\begin{proof}
We prove the first equality, while the second one holds by analogy.
Let $u_1, \dots, u_n$ be an ordering of instances from $U$ such that
$$
T(\widetilde{R}(u^{\dagger}, u_1), \hat{A}(u_1)) \geq \dots \geq T(\widetilde{R}(u^{\dagger}, u_n), \hat{A}(u_n)).
$$
Applying negator $N$ to the previous inequalities and using the fact that $N$ is decreasing, together with property (\ref{eq:implicator_negator_property2}), we have that
$$
I(\widetilde{R}(u^{\dagger}, u_1), N(\hat{A}(u_1))) \leq \dots \leq I(\widetilde{R}(u^{\dagger}, u_n), N(\hat{A}(u_n))).
$$
Also,
\begin{align*}
    N(\underline{\hat{A}}^{W_U}(u^{\dagger})) &= 1 -  \sum_{u =1}^{n} (W_U)_i \cdot T(\widetilde{R}(u^{\dagger}, u_i), \hat{A}(u_i))) \\
    &= \sum_{u =1}^{n} (W_U)_i \cdot (1 -  T(\widetilde{R}(u^{\dagger}, u_i), \hat{A}(u_i))) \\
    &= \sum_{u =1}^{n} (W_U)_i \cdot  I(\widetilde{R}(u^{\dagger}, u_i), N(\hat{A}(u_i))) \\
    &= \sum_{u =1}^{n} (W_L)_i \cdot  I(\widetilde{R}(u^{\dagger}, u_{n - i + 1}), N(\hat{A}(u_{n - i + 1}))) = co\overline{\hat{A}}^{W_L}(u^{\dagger}). 
\end{align*}
The third equality holds form property (\ref{eq:implicator_negator_property2}) and the fact that $N$ is the standard negator. In the fourth equality, we replaced indices $i$ with indices $n - i + 1$ and applied the complementarity of $W_U$ and $W_L$.
\end{proof}
Proposition \ref{prop:prediction_duality_owa} states that if $\mathbbm{A}$ is $N$-invariant for $N$ the standard negator, the duality analogous to (\ref{eq:dual_averaging}) holds. An example of such an averaging operator is the arithmetic mean.

\section{Approximate calculation}
\label{sec:approx_calculation}
In this section, we discuss how to speed up the calculation of the optimization problems (\ref{eq:general_optimization}) and (\ref{eq:disjoint_optimization}) at the cost of the precision of the obtained granular approximations. In other words, while the time performance will be improved, the calculated approximations will slightly deviate from the granular approximations obtained using (\ref{eq:general_optimization}) and (\ref{eq:disjoint_optimization}).

We claim that the highest influence in calculating granular approximations is made by the most similar instances. For fixed $u \in U$, in the set of constraints $T(\widetilde{R}(u,v), \hat{A(v)}) \leq \hat{A}(u), v \in U$, we assert that if the constraints with higher $\widetilde{R}(u,v)$ are satisfied, then it is highly likely that the constraints with lower $\widetilde{R}(u,v)$ will also be satisfied.
Let $M$ be a positive integer and $u \in U$. We denote with $v_{(1)}, \dots, v_{(M)}$ objects from $U$ such that $\{\widetilde{R}(u, v_{1}), \dots, \widetilde{R}(u, v_{M})\}$ are the $M$ largest values from the set of values $\{ \widetilde{R}(u,v); v\in U\}$.

Then the approximate formulation of problem (\ref{eq:general_optimization}) is:
\begin{equation}
\label{eq:general_optimization_approx}
\begin{aligned}
&\text{minimize}  && \displaystyle\sum_{u \in U}  L(\Bar{A}(u), \hat{A}(u)) \\
&\text{subject to}    && T( \widetilde{R}(u,v_{(i)}), \hat{A}(v_{(i)})) \leq \hat{A}(u), \quad   u \in U, i \in \{1, \dots, M \}\\
  &              &&0 \leq \hat{A}(u) \leq 1, \quad u \in U,
\end{aligned}
\end{equation}
while the approximate formulation of problem (\ref{eq:disjoint_optimization}) is
\begin{equation}
\label{eq:disjoint_optimization_approx}
\begin{aligned}
&\text{minimize}  && \displaystyle\sum_{u \in U} L(1, \beta_u) \\
&\text{subject to} && T(\beta_u, \beta_{v_{i}}) \leq I(\widetilde{R}(u,v_{i}), S(u,v_{i}))),  \quad   u \in U, i \in \{1,\dots, M \}\\
  &              &&0 \leq \beta_u \leq 1, \quad u \in U.
\end{aligned}
\end{equation}
Note that the instances $v_{(1)}, \dots, v_{(M)}$ are different for every $u$. We can observe that now instead of $|U|^2$ constraints, both problems have $M\cdot |U|$ constraints where usually $M << |U|$. With such reduction of the number of constraints, significant time savings can be achieved. 

In the left side of Figure \ref{fig:approximation_comparison}, we compare the granular approximations for different values of parameter $\gamma$, different loss functions and different similarity relations. Every row stands for one combination of a loss function and a similarity relation which is indicated in the individual titles of images. We express $M$ as a fraction of $|U|$ called the $nn$ parameter (short form of ``nearest neighbors"), i.e.  $nn = \frac{M}{|U|}$. For different $nn$ values from 0.5 to 0.01, we calculate the granular approximation and compare it with the case when $nn=1$, i.e., when all constraints are used. We calculate the absolute difference between two granular approximations and take the average as the measure of difference between two granular approximations (basically, we apply the MAE loss). We do this for every dataset from Section \ref{subsec:datasets} and then average the results.

\begin{figure}[H]
    \centering
    
    \includegraphics[width = .48\textwidth]{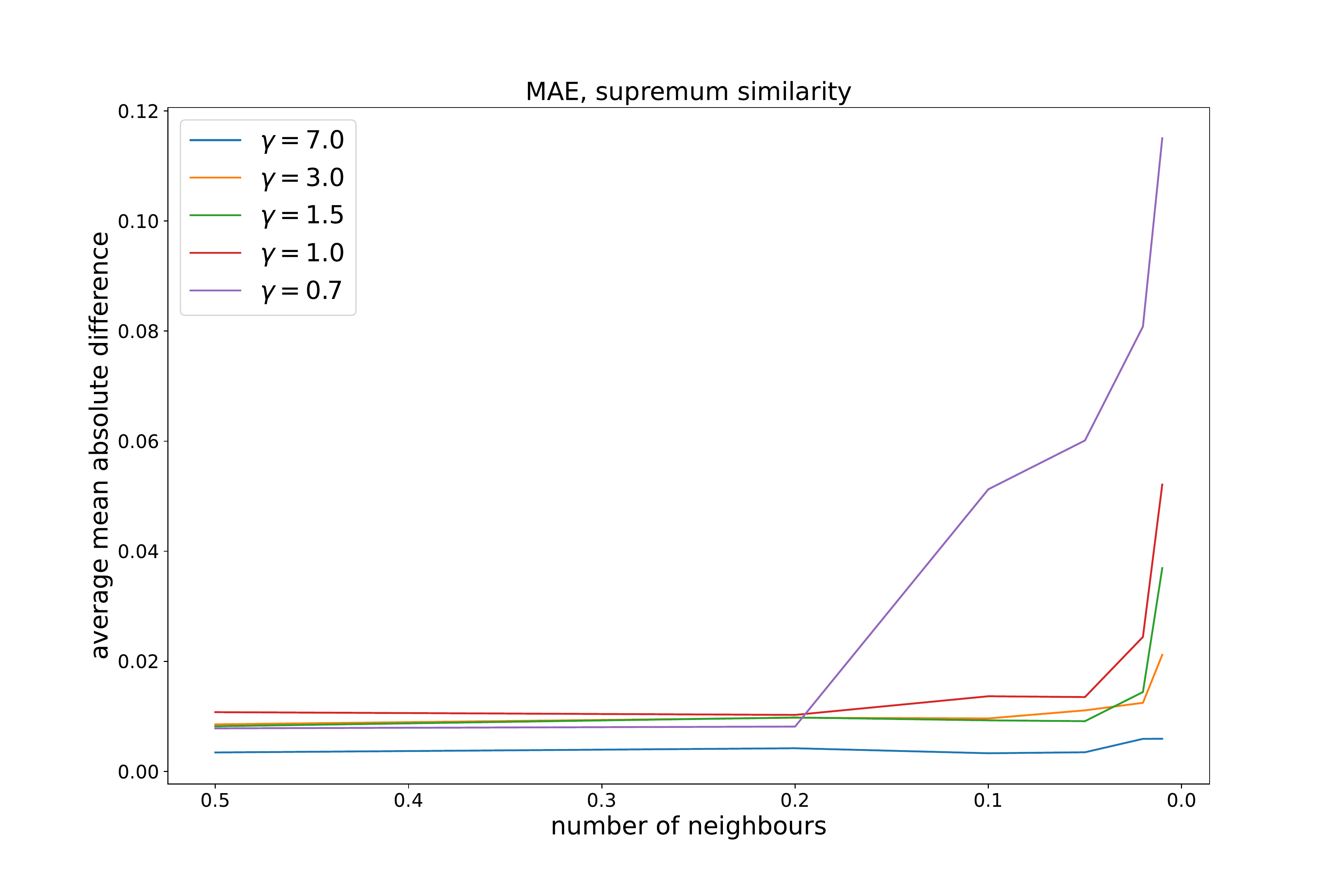}
    \includegraphics[width = .48\textwidth]{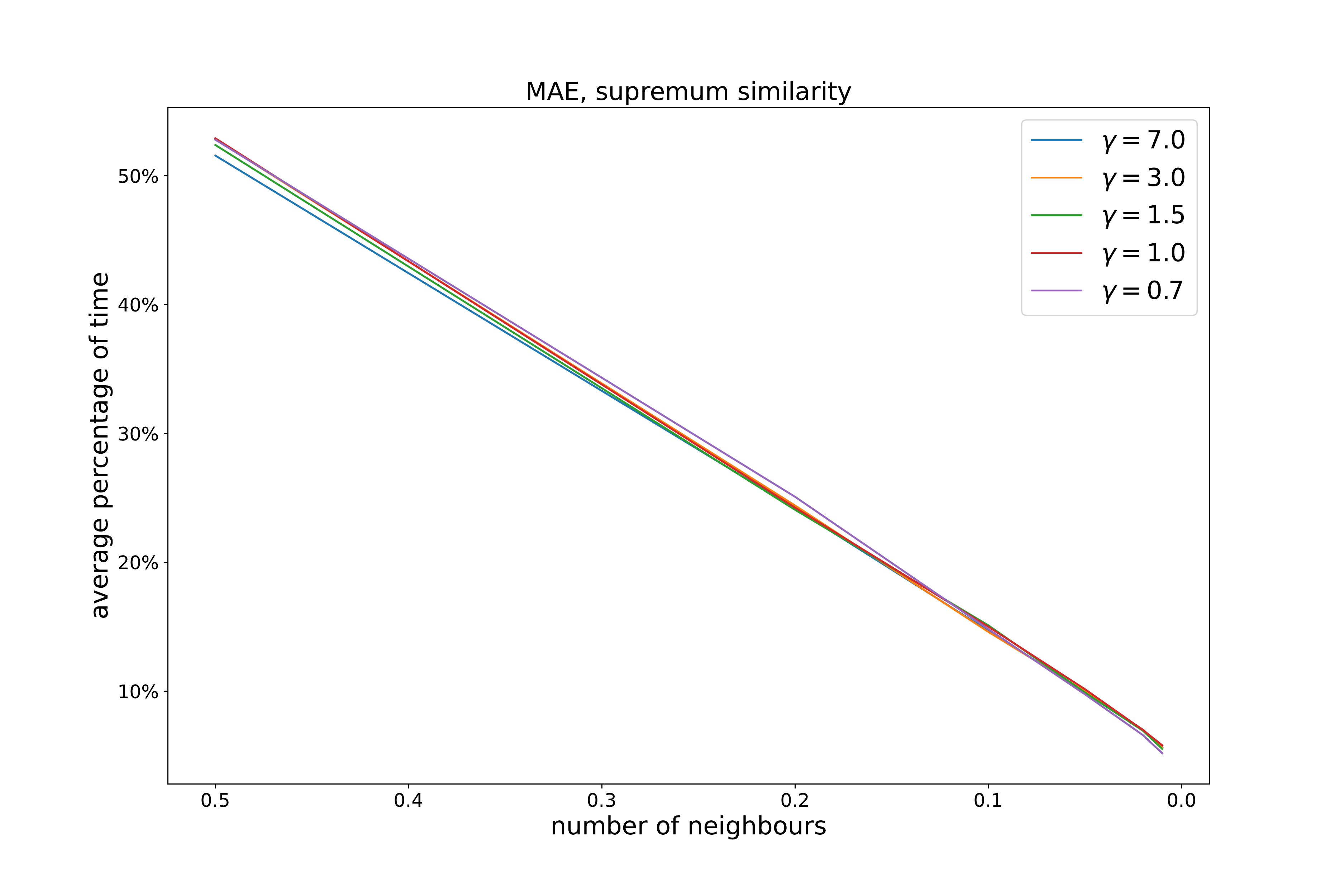}
    \includegraphics[width = .48\textwidth]{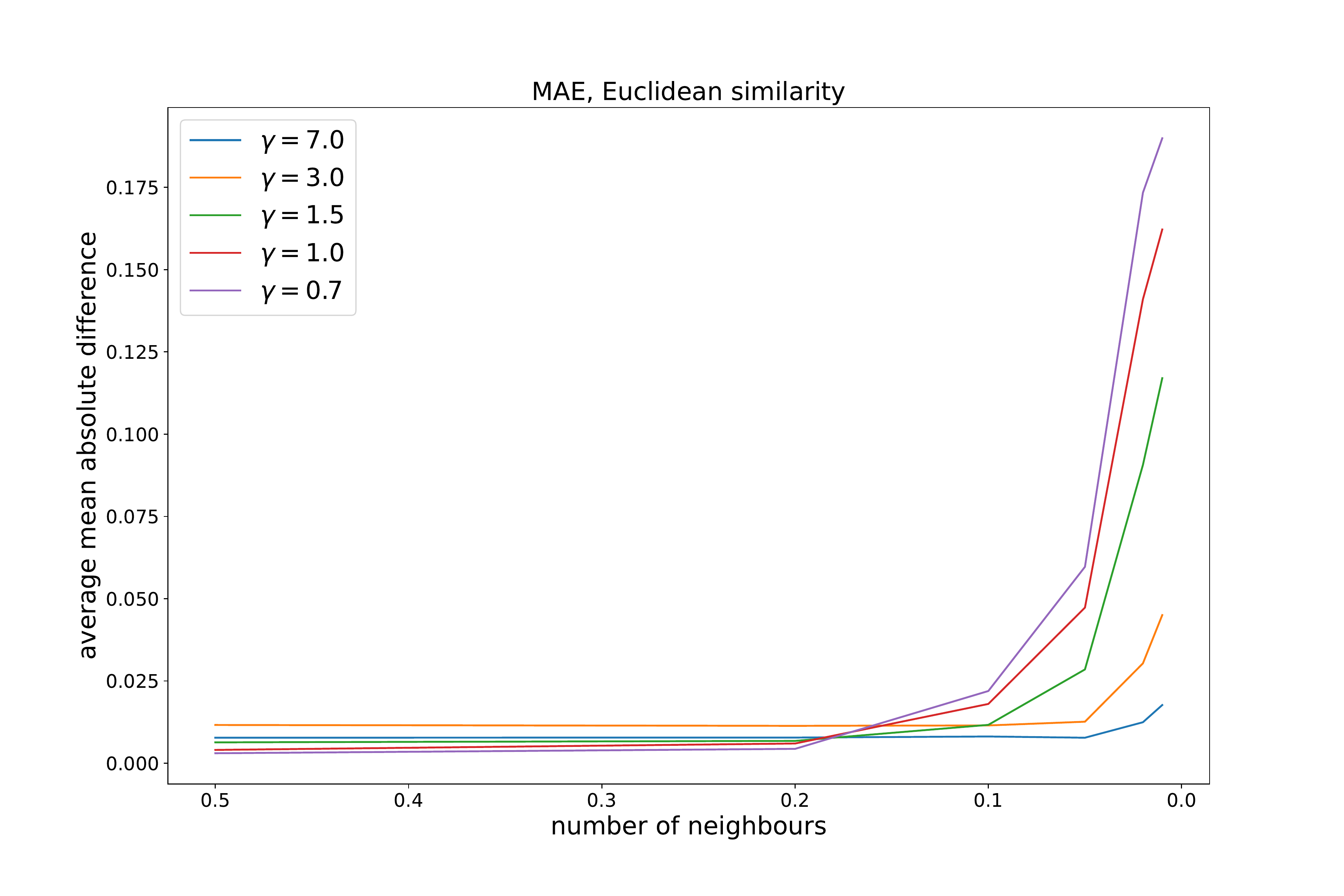}
    \includegraphics[width = .48\textwidth]{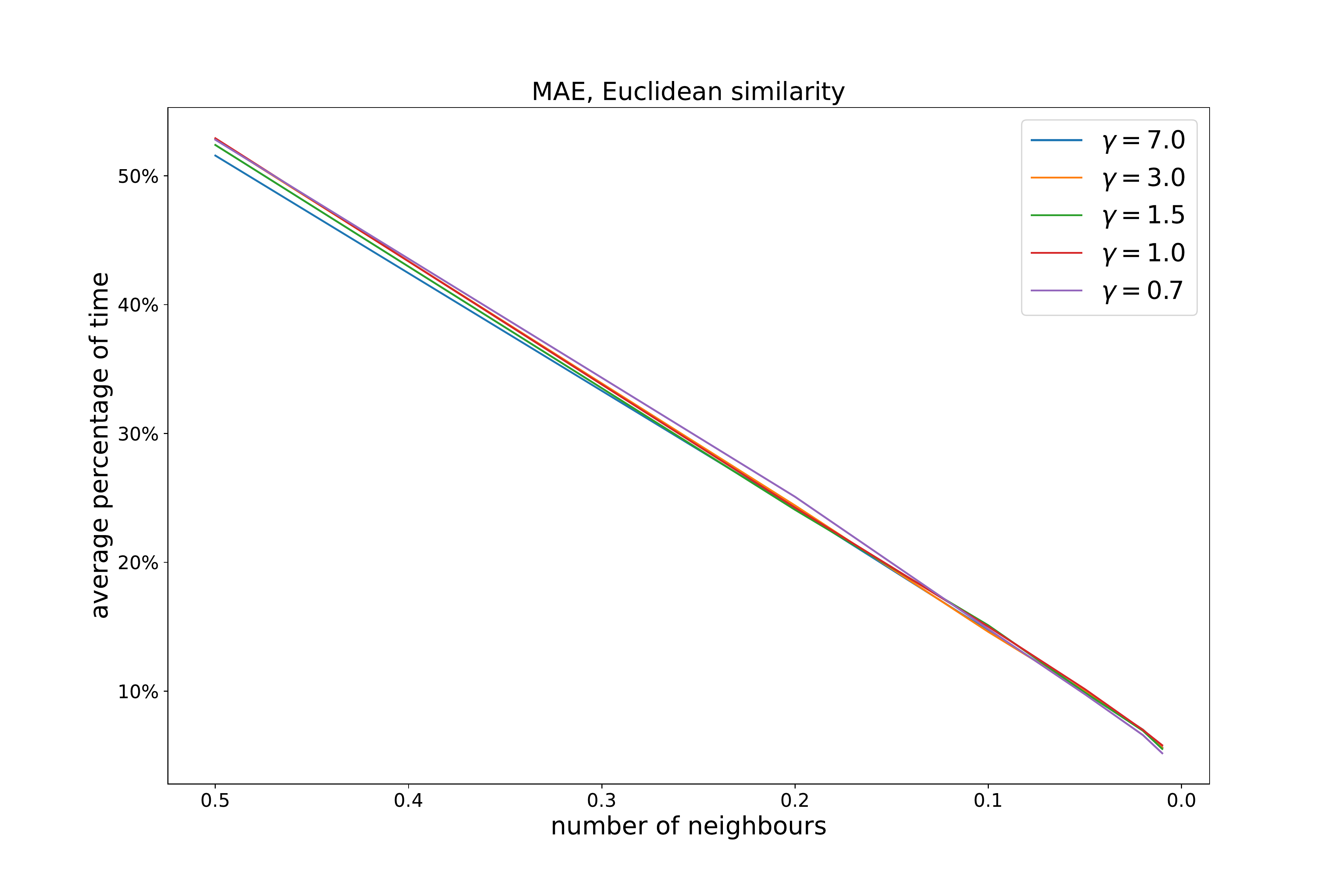}
    \includegraphics[width = .48\textwidth]{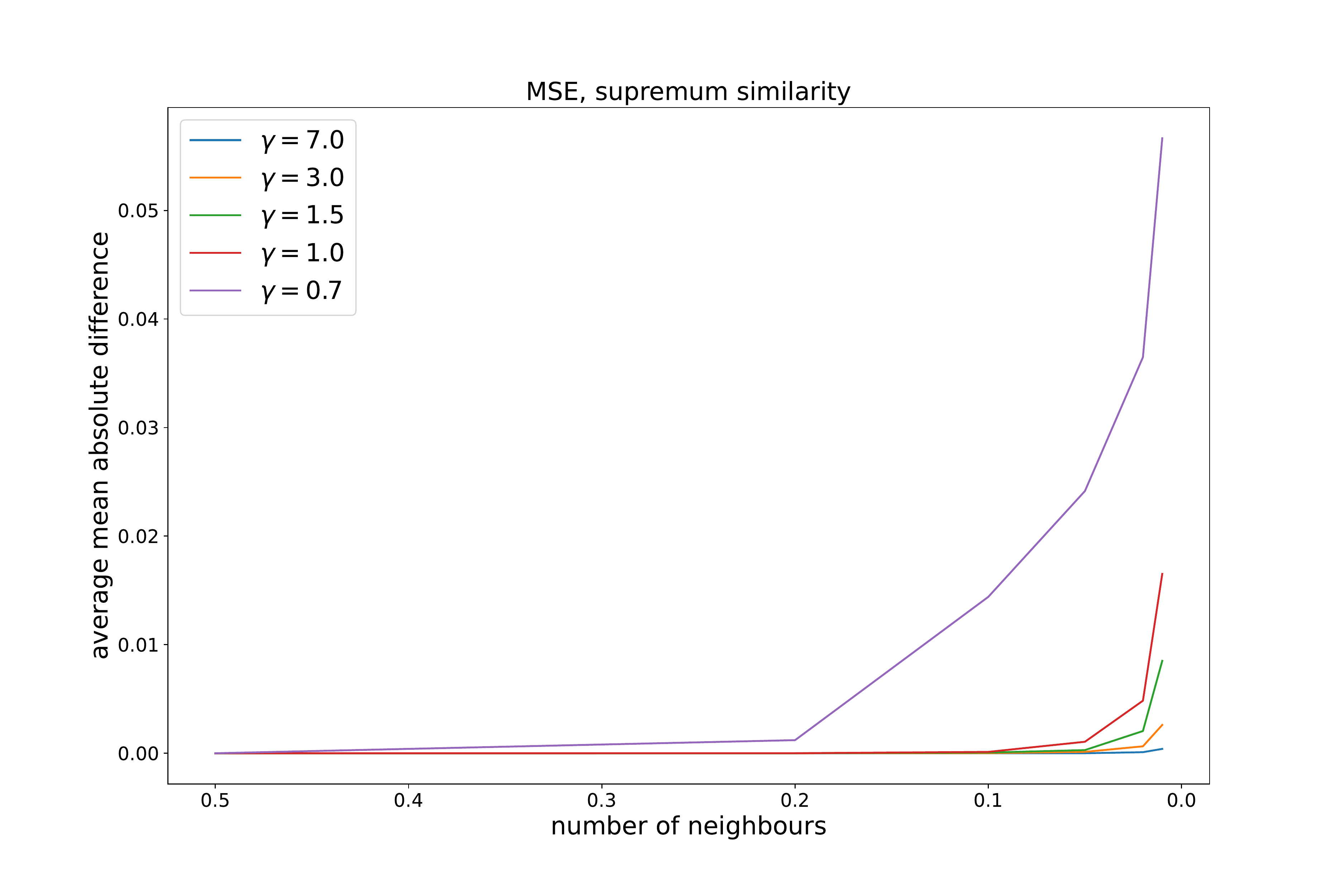}
    \includegraphics[width = .48\textwidth]{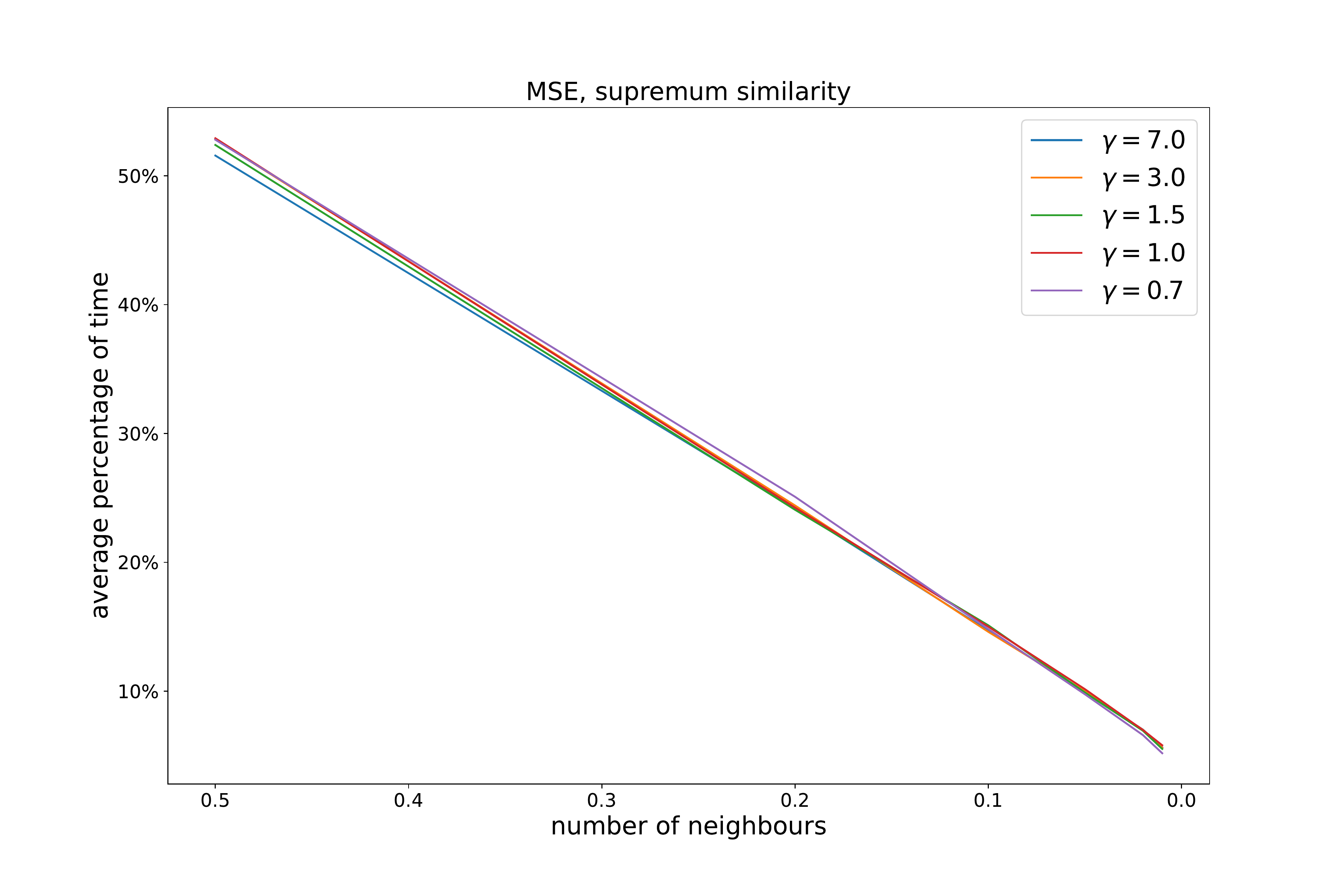}
    \includegraphics[width = .48\textwidth]{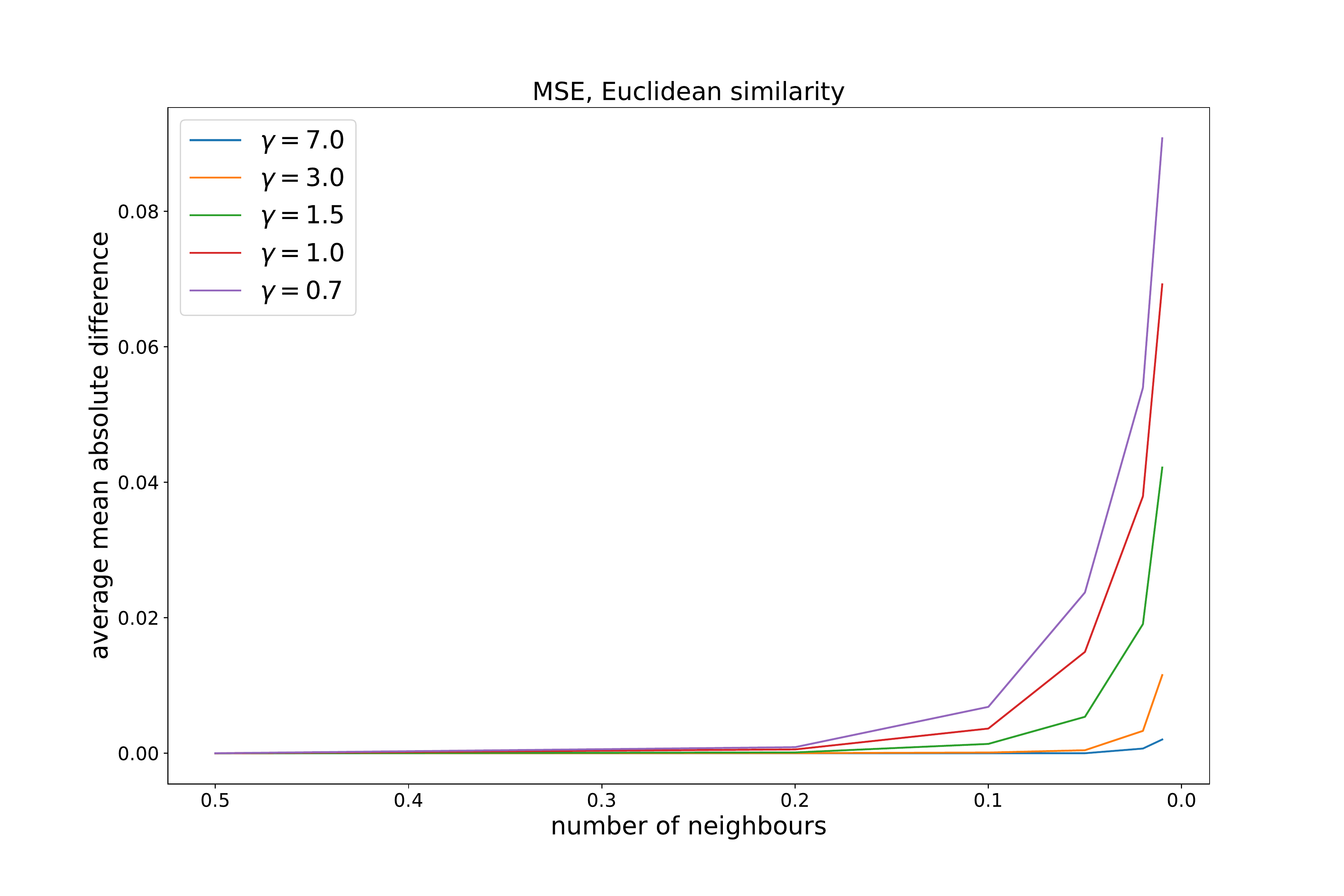}
    \includegraphics[width = .48\textwidth]{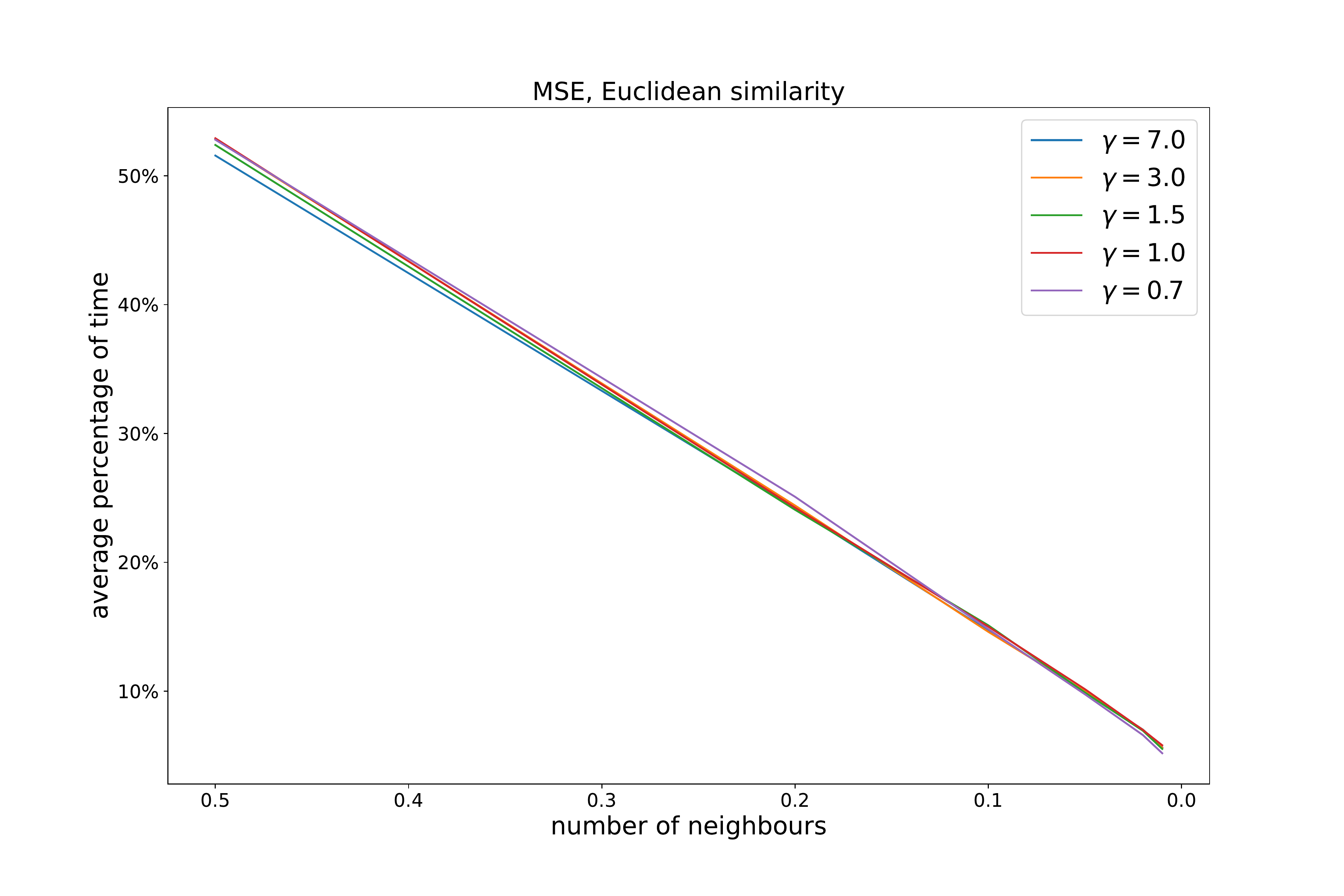}
    \caption{Approximate calculations of granular approximations - comparisons}
    \label{fig:approximation_comparison}
\end{figure}{}

In all images, we can clearly see that the difference is bigger when the $nn$ parameter is smaller. When reducing the $nn$ parameter, we observe that the difference is growing more rapidly for larger values of $\gamma$. Also, the difference is larger when MAE is used compared to MSE. In the right side of Figure \ref{fig:approximation_comparison}, we depict the times consumed to calculate the granular approximations as fractions of the time spent to calculate the granular approximation for $nn=1$. We can observe that the time needed for the calculation is decreasing linearly as a function of $nn$, for $nn$ becoming small. For example, if $nn=0.02$, i.e., if we use $0.02 \cdot |U|^2$ constraints, we save $98\%$ of time compared to the case where we use all constraints.

In the following section, we empirically test if reducing the number of constraints significantly affects the predictive performance of FGAC.

\section{Experiments}
\label{sec:experiments}
\subsection{Experimental setup}
In this section, we test the performance of FGAC and the models from Section \ref{subsec:models_to_compare_with} on data from Subsection \ref{subsec:datasets} together with the encoding of nominal attributes explained in the same subsection. We implemented FGAC in the PYTHON programming language \cite{van1995python}. In the current version, we used the Łukasiewicz $t$-norm and the corresponding IMTL triplet in order to evaluate the estimated membership degree (\ref{eq:averaging_prediction}). To solve optimization problems (\ref{eq:T_L_linear}) and (\ref{eq:T_L_quadratic}), we use the GUROBI solver \cite{gurobi} and its API for PYTHON. The code for the experiments is available on url: \url{https://github.com/markopalangetic/FGAC_experiments}.

The experiments are also implemented in the PYTHON programming language.
For every model, we select one hyperparameter which will be tuned, i.e., the hyperparameter for which the model performs best will be chosen. The interpretation of these hyperparameters is that they control the bias-variance trade-off, i.e., their tuning is used to balance between overfitting and underfitting.

For the kFRNN models, a parameter which controls the number of non-zero OWA weights is used. The approach is motivated by \cite{ramentol2015ifrowann}. In that case, only the last $k$ values of $W_L$ (first $k$ values of $W_U$) are non-zero. In these experiments, the non-zero values will be those introduced in Section \ref{subsec:owa} (additive, exponential, inverse additive).

For FGAC, $\gamma$ will be the hyperparameter that is tuned. We provide an example to illustrate that $\gamma$ is indeed a parameter that balances between bias and variance.  

\begin{figure}[H]
    \centering
    \includegraphics[width = .32\textwidth]{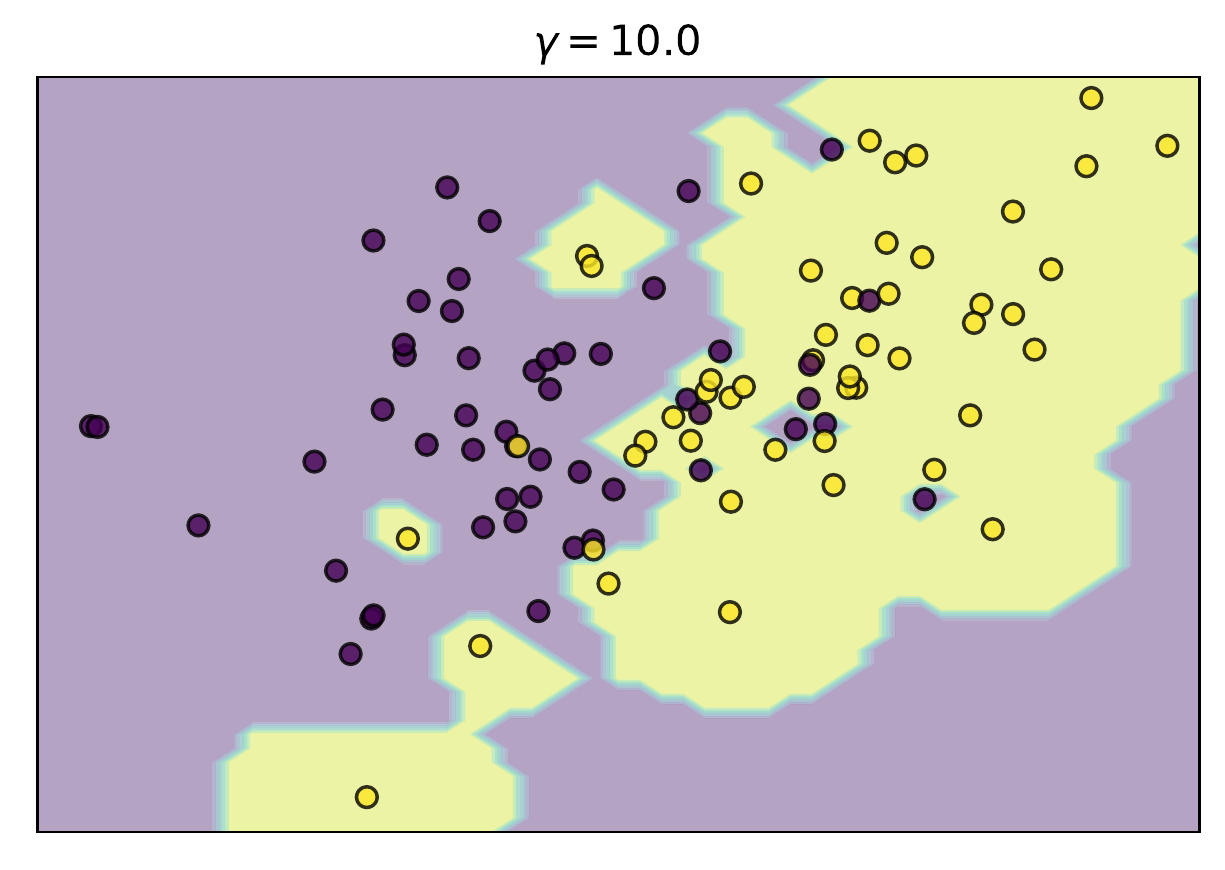}
    \includegraphics[width = .32\textwidth]{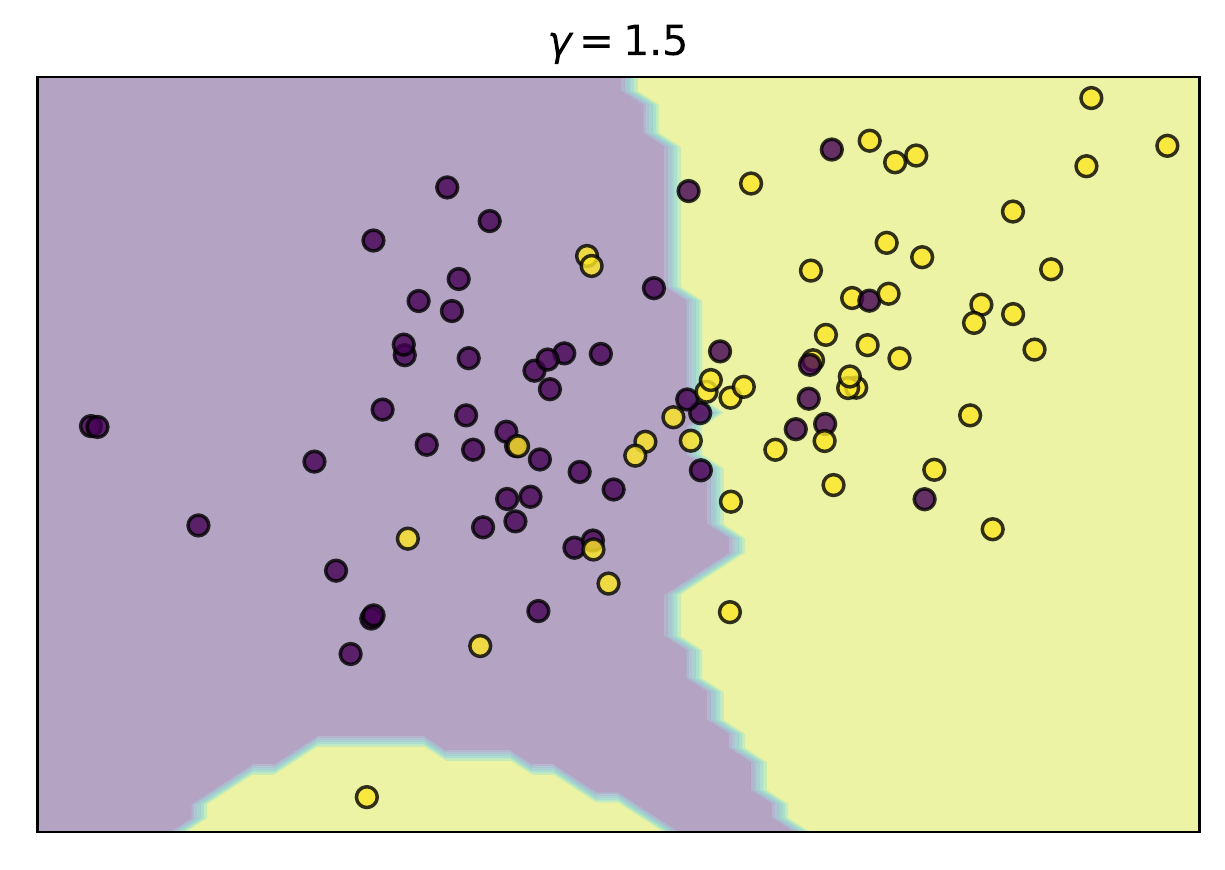}
    \includegraphics[width = .32\textwidth]{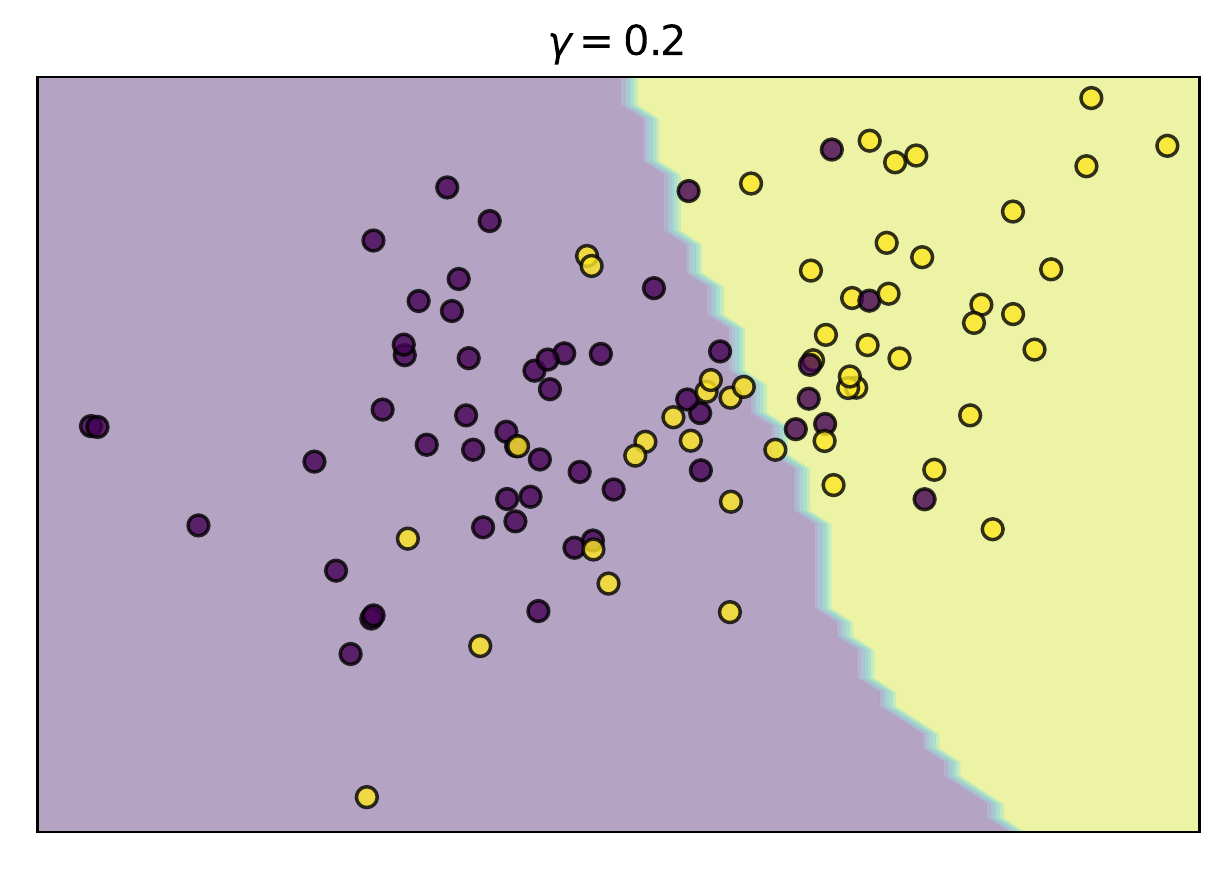}
    \caption{Illustrations of decision spaces for different $\gamma$}
    \label{fig:decision_spaces}
\end{figure}

In Figure \ref{fig:decision_spaces}, we generated 100 synthetic data instances for a binary classification problem to illustrate the decision areas for different values of parameter $\gamma$. The dataset was generated using the SCIKIT-LEARN package and the "make\_classification" function. The control of the random number generator is achieved with the command "rand\_state=10".

In the left image in Figure \ref{fig:decision_spaces}, we can see clear overfitting for $\gamma=10$, as an example of a high value, where the learning process is affected by the noise in data. As $\gamma$ decreases, we can see that the decision boundary (the line that separates the two decision classes) becomes smoother and simpler (middle image and right image in Figure \ref{fig:decision_spaces}) which indicates a less noise-affected learning process. For a very small $\gamma$ parameter (right image), we observe an even simpler decision line which may be a sign of underfitting and indicates that the model did not properly capture the relationship between the condition attributes and the decision attribute. 

\subsection{Comparison of the different versions of FGAC}

We first test if the approximate calculation of FGAC from Section \ref{sec:approx_calculation} affects the prediction performance. We evaluate the performance of FGAC for the $nn$ parameter equal to 1, .2 and .02, for both MAE and MSE loss functions given in (\ref{eq:mse}) and (\ref{eq:mae}) and for both Euclidean and supremum similarities. This gives in total $3\cdot2 \cdot 2 = 12$ different models. The models are applied to the data from Section \ref{subsec:datasets}.

Before executing the models, we apply random oversampling of the minority classes. We randomly sample instances from the minority classes and add copies of them to the dataset until all decision classes from the training set have an equal amount of instances that is equal to the size of the majority decision class. To implement the random oversampling, PYTHON package IMBLEARN and its "RandomOverSampler" class were used. To control the random number generator and to ensure the reproducibility of the results, we set "rand\_state=10".

In order to evaluate the performance of each model, 5-fold cross-validation is used to tune parameter $\gamma$ and to evaluate the performance. The performance evaluation metric used is balanced accuracy. After preliminary tests, we decided to tune $\gamma$ from the following 11 possible values: $\{0.5, 0.7, 0.8, 1, 1.5, 2, 3, 4, 5, 7, 10 \}$.
The fine-tuning and cross-validation are implemented in the SCIKIT-LEARN package and the "GridSearchCV" class where the controlling of the random number generator is achieved with "rand\_state=10". The results are shown in Table \ref{tab:results_approximation}. The names of the columns in the table are composed from the type of the loss function used ("mae" or "mse"), the type of the similarity relation ("supremum" or "Euclidean") and the $nn$ parameter ("nn2", "nn20", "nn100"). 

We first want to test if the performance of the models is different for $nn=1$ and for $nn=.02$. For that purpose, we use the Wilcoxon signed-rank test \cite{wilcoxon1992individual}. For every combination of a loss function and a similarity relation, we test if the performance of models is significantly different, where the null hypothesis is that they are not significantly different. We obtain the following p-values:
\begin{table}[H]
\begin{adjustbox}{width=\columnwidth,center}
\begin{tabular}{|l|l|l|l|l|}
\hline
           & mae\_supremum & mae\_Euclidean & mse\_supremum & mse\_Euclidean \\ \hline
$p$-value & 0.468         & 0.32           & 0.844         & 0.753          \\ \hline
\end{tabular}
\end{adjustbox}
\end{table}
We can observe that all $p$-values are significantly higher than the usual significance level 0.05 which means that, based on the provided evidence, we cannot conclude that the performance is significantly different (i.e., we cannot reject the null hypothesis). Based on these results, we conclude that in practical applications, using $nn=.02$ is sufficient for a desirable performance of FGAC. 

After discarding the cases $nn=1$ and $nn=.2$, we want to decide which model is the best among the remaining four. First we test if their performance is significantly different from each other. For that purpose, we use the non-parametric Friedman chi-squared test \cite{friedman1937use}. The null hypothesis of this test is that the performances of the models is indifferent. After running the test, we get that the $p$-value is of order $10^{-5}$, which means that we strongly reject the null hypothesis, i.e., the models are significantly different. The next step is to recognize the best model and to test if it is significantly better than the others. If we look at the average rankings of the models, we have the following:
\begin{table}[H]
\begin{adjustbox}{width=\columnwidth,center}
\begin{tabular}{|l|l|l|l|l|}
\hline
             & mae\_supremum & mae\_Euclidean & mse\_supremum & mse\_Euclidean \\ \hline
average rank & 3.5           & 2.11           & 2.78          & 1.61           \\ \hline
\end{tabular}
\end{adjustbox}
\end{table}
We observe that the model which uses mean squared error and Euclidean similarity has the best average ranking. We hypothesize that this model is the best performing one and we test if this is statistically significant. We use Holm post-hoc analysis \cite{holm1979simple} as well as its adaptation for comparing machine learning models from \cite{demvsar2006statistical}. Following the criticism of \cite{demvsar2006statistical} expressed in \cite{benavoli2016should}, we use the Wilcoxon test for the pairwise comparisons. After the Holm procedure is applied, the obtained final $p$-value is 0.099 which means that we cannot confidently claim that the best ranked model is significantly the best performing one. The final $p$-value in this case is obtained as a maximum of the adjusted $p$-values calculated during the Holm procedure.
However, due to its average ranking on the provided datasets, we continue using it as the representative version of FGAC in the comparison with the other ML models.

The next step is to test how the use of OWA operators affects the performance of FGAC. In Table \ref{tab:results_owa}, we list the results of OWA-based FGAC when $\min$ and $\max$ are replaced with OWA operators with weights from Section \ref{subsec:owa}. As before, the results are given for both MSE and MAE loss functions, as well as for both supremum and Euclidean similarities. First, we test if the 4 models for fixed OWA weights perform differently from each other using the Friedman test. We obtain the following results:
\begin{table}[H]
\centering
\begin{tabular}{|l|l|l|l|}
\hline
weight:   & add                               & exp                              & invadd                           \\ \hline
$p$-value: & $2.06 \cdot 10^{-5}$ & $2.1 \cdot 10^{-4}$ & $1.2 \cdot 10^{-3}$ \\ \hline
\end{tabular}
\end{table}
All $p$-values are very close to $0$ which means that the performances are indeed significantly different. If we calculate the average rankings, we find:

\begin{table}[H]
\begin{adjustbox}{width=\columnwidth,center}
\begin{tabular}{|l|l|l|l|l|}
\hline
       & mae\_supremum & mae\_Euclidean & mse\_supremum & mse\_Euclidean \\ \hline
add    & 3.17          & 2.33           & 3.11          & 1.39           \\ \hline
exp    & 3.39          & 2.44           & 2.61          & 1.56           \\ \hline
invadd & 3.33          & 2.33           & 2.56          & 1.78           \\ \hline
\end{tabular}
\end{adjustbox}
\end{table}
As in the non-OWA version of the FGAC, we observe that the best average ranking is achieved for the MSE loss function and the Euclidean similarity. As before, using post-hoc analysis we test if the performance of the best ranked model is significantly better than others. We obtain the following $p$-values:

\begin{table}[H]
\centering
\begin{tabular}{|l|l|l|l|}
\hline
            & add   & exp   & invadd \\ \hline
$p$-value: & 0.008 & 0.059 & 0.038  \\ \hline
\end{tabular}
\end{table}
From these $p$-values, we can conclude that for the additive and inverse additive weights, we can confidently say that the best ranking model performs better than the other models. For the exponential weights, the $p$-value is negligibly higher than the usual significance level (0.05). In any case, we will use the best ranking models as representatives of the particular OWA-weights in further comparisons. 

In the next step, we compare the performances of the chosen models for different OWA weights with the chosen FGAC model from before. To recall, we have 4 different models, 3 with different OWA weights (add, exp, invadd) where all 4 models use MSE loss and Euclidean similarity. After performing the Friedman test on their performances, we get a $p$-value equal to $0.293$ which can be considered high. In other words, we confidently claim that we do not have enough evidence to conclude that using OWA-operators instead of extrema operators will lead to different results. The reason for that may lie in the fact that the learning process is based on the constraints that use extrema instead of OWA operators. The latter are only used in the prediction phase and not in the learning phase. In other words, the learning phase is the key part and the OWA intervention during the prediction phase cannot improve the results.

For this reason, we exclude OWA-based FGAC from the further analysis.

\begin{landscape}
\begin{table}[]
    \centering
\begin{adjustbox}{width=\columnwidth}

\begin{tabular}{l|rrr|rrr|rrr|rrr}
\toprule
name &  
\makecell{fgac\_mae \\ supremum\_nn100} &
\makecell{fgac\_mae \\ supremum\_nn20} &
\makecell{fgac\_mae \\ supremum\_nn2} &
\makecell{fgac\_mae \\ Euclidean\_nn100} &
\makecell{fgac\_mae \\ Euclidean\_nn20} &
\makecell{fgac\_mae \\ Euclidean\_nn2} &
\makecell{fgac\_mse \\ supremum\_nn100} &
\makecell{fgac\_mse \\ supremum\_nn20} &
\makecell{fgac\_mse \\ supremum\_nn2} &
\makecell{fgac\_mse \\ Euclidean\_nn100} &
\makecell{fgac\_mse \\ Euclidean\_nn20} &
\makecell{fgac\_mse \\ Euclidean\_nn2} \\
\midrule
australian   &                                0.856 &                               0.855 &                              0.857 &                               0.863 &                              0.857 &                             0.847 &                                0.869 &                               0.868 &                              0.868 &                               0.867 &                              0.866 &                             0.862 \\
breast       &                                0.536 &                               0.536 &                              0.536 &                               0.661 &                              0.667 &                             0.663 &                                0.536 &                               0.536 &                              0.536 &                               0.682 &                              0.682 &                             0.666 \\
crx          &                                0.733 &                               0.736 &                              0.736 &                               0.820 &                              0.818 &                             0.827 &                                0.771 &                               0.776 &                              0.776 &                               0.827 &                              0.828 &                             0.825 \\
german       &                                0.509 &                               0.519 &                              0.519 &                               0.645 &                              0.664 &                             0.663 &                                0.540 &                               0.516 &                              0.516 &                               0.663 &                              0.663 &                             0.659 \\
saheart      &                                0.671 &                               0.671 &                              0.673 &                               0.679 &                              0.667 &                             0.663 &                                0.669 &                               0.672 &                              0.669 &                               0.674 &                              0.674 &                             0.686 \\
ionosphere   &                                0.929 &                               0.931 &                              0.925 &                               0.941 &                              0.933 &                             0.937 &                                0.929 &                               0.929 &                              0.929 &                               0.939 &                              0.939 &                             0.939 \\
mammographic &                                0.802 &                               0.799 &                              0.801 &                               0.808 &                              0.807 &                             0.812 &                                0.804 &                               0.803 &                              0.803 &                               0.804 &                              0.804 &                             0.804 \\
pima         &                                0.701 &                               0.704 &                              0.703 &                               0.719 &                              0.711 &                             0.720 &                                0.719 &                               0.719 &                              0.722 &                               0.731 &                              0.732 &                             0.734 \\
wisconsin    &                                0.955 &                               0.958 &                              0.954 &                               0.968 &                              0.968 &                             0.968 &                                0.966 &                               0.966 &                              0.966 &                               0.972 &                              0.972 &                             0.972 \\
vowel        &                                0.957 &                               0.957 &                              0.955 &                               0.963 &                              0.962 &                             0.964 &                                0.965 &                               0.965 &                              0.965 &                               0.974 &                              0.974 &                             0.974 \\
wdbc         &                                0.881 &                               0.881 &                              0.893 &                               0.917 &                              0.917 &                             0.917 &                                0.912 &                               0.912 &                              0.910 &                               0.941 &                              0.941 &                             0.933 \\
balance      &                                0.704 &                               0.703 &                              0.705 &                               0.701 &                              0.713 &                             0.717 &                                0.635 &                               0.635 &                              0.630 &                               0.801 &                              0.800 &                             0.801 \\
glass        &                                0.597 &                               0.597 &                              0.523 &                               0.633 &                              0.636 &                             0.589 &                                0.593 &                               0.601 &                              0.577 &                               0.633 &                              0.633 &                             0.628 \\
iris         &                                0.967 &                               0.967 &                              0.960 &                               0.960 &                              0.947 &                             0.947 &                                0.967 &                               0.967 &                              0.967 &                               0.967 &                              0.967 &                             0.960 \\
cleveland    &                                0.330 &                               0.331 &                              0.335 &                               0.309 &                              0.299 &                             0.351 &                                0.328 &                               0.315 &                              0.301 &                               0.318 &                              0.314 &                             0.359 \\
bupa         &                                0.608 &                               0.620 &                              0.612 &                               0.619 &                              0.605 &                             0.611 &                                0.625 &                               0.625 &                              0.625 &                               0.641 &                              0.641 &                             0.644 \\
haberman     &                                0.601 &                               0.598 &                              0.646 &                               0.607 &                              0.604 &                             0.642 &                                0.634 &                               0.634 &                              0.635 &                               0.628 &                              0.628 &                             0.627 \\
heart        &                                0.776 &                               0.776 &                              0.792 &                               0.820 &                              0.818 &                             0.813 &                                0.790 &                               0.790 &                              0.781 &                               0.820 &                              0.817 &                             0.814 \\
spectfheart  &                                0.604 &                               0.602 &                              0.602 &                               0.600 &                              0.600 &                             0.654 &                                0.749 &                               0.749 &                              0.742 &                               0.727 &                              0.730 &                             0.763 \\
\bottomrule
\end{tabular}
\end{adjustbox}
\caption{FGAC results for different $nn$ parameters}
\label{tab:results_approximation}
\end{table}

\begin{table}[]
    \centering
\begin{adjustbox}{width=\columnwidth}
    \begin{tabular}{l|rrrr|rrrr|rrrr}
\toprule
name &  
\makecell{fgac\_mae \\ supremum\_add} &  
\makecell{fgac\_mae \\ Euclidean\_add} &  
\makecell{fgac\_mse \\ supremum\_add} &  
\makecell{fgac\_mse \\ Euclidean\_add} &  
\makecell{fgac\_mae \\ supremum\_exp} &  
\makecell{fgac\_mae \\ Euclidean\_exp} &  
\makecell{fgac\_mse \\ supremum\_exp} &  
\makecell{fgac\_mse \\ Euclidean\_exp} &  
\makecell{fgac\_mae \\ supremum\_invadd} &  
\makecell{fgac\_mae \\ Euclidean\_invadd} &  
\makecell{fgac\_mse \\ supremum\_invadd} &  
\makecell{fgac\_mse \\ Euclidean\_invadd} \\
\midrule
australian   &                              0.840 &                             0.851 &                              0.842 &                             0.865 &                              0.846 &                             0.843 &                              0.846 &                             0.850 &                                 0.850 &                                0.849 &                                 0.853 &                                0.848 \\
breast       &                              0.537 &                             0.647 &                              0.537 &                             0.654 &                              0.537 &                             0.660 &                              0.537 &                             0.667 &                                 0.537 &                                0.654 &                                 0.537 &                                0.666 \\
crx          &                              0.726 &                             0.837 &                              0.727 &                             0.862 &                              0.740 &                             0.829 &                              0.738 &                             0.835 &                                 0.743 &                                0.835 &                                 0.769 &                                0.838 \\
flare        &                              0.529 &                             0.532 &                              0.528 &                             0.570 &                              0.562 &                             0.621 &                              0.528 &                             0.612 &                                 0.536 &                                0.624 &                                 0.528 &                                0.620 \\
german       &                              0.505 &                             0.680 &                              0.505 &                             0.695 &                              0.513 &                             0.669 &                              0.513 &                             0.657 &                                 0.513 &                                0.669 &                                 0.514 &                                0.682 \\
saheart      &                              0.658 &                             0.673 &                              0.677 &                             0.669 &                              0.672 &                             0.666 &                              0.689 &                             0.689 &                                 0.664 &                                0.665 &                                 0.675 &                                0.675 \\
ionosphere   &                              0.937 &                             0.939 &                              0.937 &                             0.937 &                              0.937 &                             0.937 &                              0.937 &                             0.939 &                                 0.937 &                                0.937 &                                 0.937 &                                0.937 \\
mammographic &                              0.802 &                             0.808 &                              0.803 &                             0.813 &                              0.802 &                             0.809 &                              0.802 &                             0.810 &                                 0.801 &                                0.808 &                                 0.808 &                                0.810 \\
pima         &                              0.700 &                             0.713 &                              0.696 &                             0.713 &                              0.704 &                             0.707 &                              0.713 &                             0.719 &                                 0.700 &                                0.709 &                                 0.705 &                                0.718 \\
wisconsin    &                              0.964 &                             0.969 &                              0.954 &                             0.956 &                              0.954 &                             0.966 &                              0.958 &                             0.973 &                                 0.961 &                                0.972 &                                 0.960 &                                0.962 \\
vowel        &                              0.945 &                             0.956 &                              0.957 &                             0.964 &                              0.949 &                             0.959 &                              0.961 &                             0.969 &                                 0.952 &                                0.956 &                                 0.960 &                                0.967 \\
wdbc         &                              0.868 &                             0.904 &                              0.900 &                             0.912 &                              0.904 &                             0.919 &                              0.918 &                             0.937 &                                 0.894 &                                0.918 &                                 0.905 &                                0.929 \\
balance      &                              0.769 &                             0.712 &                              0.784 &                             0.727 &                              0.716 &                             0.706 &                              0.722 &                             0.783 &                                 0.727 &                                0.705 &                                 0.745 &                                0.746 \\
glass        &                              0.523 &                             0.554 &                              0.574 &                             0.602 &                              0.585 &                             0.633 &                              0.640 &                             0.708 &                                 0.559 &                                0.588 &                                 0.627 &                                0.683 \\
iris         &                              0.967 &                             0.953 &                              0.973 &                             0.967 &                              0.967 &                             0.967 &                              0.973 &                             0.973 &                                 0.967 &                                0.953 &                                 0.973 &                                0.960 \\
cleveland    &                              0.346 &                             0.349 &                              0.327 &                             0.401 &                              0.319 &                             0.357 &                              0.310 &                             0.375 &                                 0.326 &                                0.353 &                                 0.342 &                                0.357 \\
bupa         &                              0.558 &                             0.568 &                              0.568 &                             0.607 &                              0.563 &                             0.579 &                              0.594 &                             0.606 &                                 0.563 &                                0.579 &                                 0.595 &                                0.601 \\
haberman     &                              0.595 &                             0.613 &                              0.641 &                             0.633 &                              0.630 &                             0.618 &                              0.636 &                             0.610 &                                 0.602 &                                0.607 &                                 0.630 &                                0.613 \\
heart        &                              0.785 &                             0.813 &                              0.781 &                             0.828 &                              0.792 &                             0.816 &                              0.795 &                             0.822 &                                 0.798 &                                0.815 &                                 0.798 &                                0.833 \\
spectfheart  &                              0.557 &                             0.580 &                              0.630 &                             0.639 &                              0.585 &                             0.637 &                              0.732 &                             0.732 &                                 0.578 &                                0.604 &                                 0.701 &                                0.703 \\
\bottomrule
\end{tabular}
\end{adjustbox}
\caption{FGAC results for different OWA weights}
\label{tab:results_owa}
\end{table}

\end{landscape}

\subsection{Comparison of FGAC with other ML methods}

We first discuss how the hyperparameters are tuned. We already stated previously that every model depends on one parameter and we tune that parameter using 5-fold cross-validation. They are selected from a finite set of values based on their performance. In the following table, we list the models and the corresponding sets of possible values of their hyperparameters.
\begin{table}[H]
\centering
\begin{tabular}{|l|l|}
\hline
models & possible hyperparameter values                             \\ \hline
FGAC   & \{0.5, 0.7, 0.8, 1, 1.5, 2, 3, 4, 5, 7, 10 \}              \\ \hline
kFRNN  & \{ all, 1, 3, 5, 10, 15, 20, 25, 30, 40, 50\}               \\ \hline
kNN    & \{1, 3, 5, 7, 10, 15, 20, 25, 30, 40, 50\}                 \\ \hline
LVQ    & \{1,2,3,4,5,6,7,8, 9,10, 11\}                              \\ \hline
CART     & \{2,3,4,5,6,7,8, 9,10, 11,12\}                             \\ \hline
\end{tabular}
\end{table}
The possible values are constructed based on the preliminary analysis. Every model is provided with 11 possible hyperparameters. Value "all" in the kFRNN hyperparameters set indicates that the OWA weights were applied on all instances. Also, after preliminary analysis, we concluded that the best performing version of kFRNN is the one with OWA additive weights and that uses Euclidean similarity and hence, it is used in the comparison process as the representative of kFRNN. 

In Table \ref{tab:final_results} we show the performances of the models. In every row, with the black bold font, we label the best performing model, with the red bold font the second one, while with the blue bold font the third best. After running the Friedman test on the results, we obtain a $p$-value equal to 0.008 which implies that the models are significantly different.

In the next table, we show the average rankings of these models.
\begin{table}[H]
\centering
\begin{tabular}{|l|l|l|l|l|l|}
\hline
models       & FGAC & kFRNN   & kNN  & LVQ  & CART   \\ \hline
average rank & 3.39 & 2.06   & 2.67 & 2.94 & 3.94 \\ \hline
\end{tabular}
\end{table}

First, we observe that FGAC has the second worst performance based on the average rank; it is only better than CART. However, if we apply the Wilcoxon test to make pairwise comparisons of FGAC with the remaining models, we obtain the $p$-values in Table \ref{tab:pairwise_comparison}.

\begin{table}[H]
    \centering
    \begin{tabular}{lrrrrr}
\toprule
name &  FGAC &  kFRNN  &  kNN &  LVQ &  CART \\
\midrule
australian   &  0.862 &   \boldblue{0.873} &          \textbf{0.885} &      \boldred{ 0.875} &     0.861 \\
breast       &                             \textbf{0.673} &                    \boldblue{ 0.663} &      \boldblue{ 0.663} &      \boldred{0.670} &     0.630 \\
crx          &  0.827 &  \textbf{ 0.879} &     \boldred{0.877} &  \boldblue{0.872} &     0.870 \\
german       &    0.656 &    \textbf{ 0.704} &     \boldblue{  0.685 }&      \textbf{0.704} &     0.674 \\
saheart      &                            \boldblue{ 0.677} &                     \boldred{ 0.679} &       0.676 &      \textbf{0.694} &     0.660 \\
ionosphere   &                            \textbf{ 0.938} &                      0.846 &   \boldblue{ 0.849} &      0.846 &    \boldred{ 0.892} \\
mammographic &                             0.807 &                      0.815 &     \boldred{ 0.823} &     \boldblue{ 0.812} &    \textbf{ 0.832} \\
pima         &                            \boldblue{ 0.736} &                      \textbf{0.737} &     \boldblue{ 0.736} &      \textbf{0.737} &     0.721 \\
wisconsin    &   \boldblue{0.972} &  \boldred{0.979} &      \textbf{ 0.981} &      0.970 &     0.948 \\
vowel        &  \boldblue{0.976} &                      \textbf{0.988} &    \textbf{0.988} &      0.728 &     0.799 \\
wdbc         &                             0.935 &                      \textbf{0.969} &    \boldred{ 0.966} &     \boldblue{ 0.945} &     0.929 \\
balance      &   \textbf{ 0.810} &  \boldred{0.768} &    \boldblue{ 0.750} &      0.654 &     0.681 \\
glass        &        0.639 &                    \boldred{0.684} &     \textbf{0.686} &      \boldred{0.684} &     0.668 \\
cleveland    &      \boldblue{  0.359} &                     \textbf{ 0.424} &      \boldred{ 0.414} &      0.347 &     0.289 \\
bupa         &         0.640 &      \textbf{ 0.655} &       0.636 &    \boldblue{  0.647} &     \boldred{0.650} \\
haberman     &        0.628 &       0.640 &    \boldred{ 0.645} &      \boldblue{0.644} &    \textbf{ 0.663} \\
heart        &       0.821 &     \boldred{  0.837} &         \boldblue{0.835} &    \textbf{  0.842} &     0.804 \\
spectfheart  &    \boldred{   0.763} &                      \boldblue{0.732} &    0.731 &     \textbf{ 0.780} &     0.675 \\
\bottomrule
\end{tabular}
    \caption{Comparison of the FGAC with the other ML models based on the balanced accuracy}
    \label{tab:final_results}
\end{table}

\begin{table}[H]
\centering
\begin{tabular}{|l|l|l|l|l|}
\hline
            & kFRNN & kNN   & LVQ   & CART  \\ \hline
$p$-values: & 0.098 & 0.246 & 0.347 & 0.167 \\ \hline
\end{tabular}
\caption{Pairwise comparison of the FGAC with other models}
\label{tab:pairwise_comparison}
\end{table}
We observe that even though some methods have a higher average rank than FGAC, we cannot claim that they are indeed significantly better (all values are larger than 0.05). In the same manner, we cannot claim that FGAC is significantly better than CART. 

In the next section, we discuss the greatest advantage of FGAC - its transparency.

\section{Transparency}
\label{sec:transparency}
In this section we discuss the transparency of the proposed FGAC and we compare it with the transparency of the other methods. We distinguish two types of transparent models, those that are globally transparent and those that are locally. Global transparency is achieved when the model as a whole can be explained and understood. Local transparency occurs when the individual predictions separately can be comprehended. We claim that FGAC can be considered as a part of both families.

In this section, we first discuss the method from the perspective that fuzzy logic can be translated into linguistic expressions. The second part of the section is related to identifying the arguments "in favour" and "against" the estimated membership degree of an individual instance. This is a form of local transparency. At the end, we compare the local transparency of FGAC with the ML methods from Section \ref{subsec:models_to_compare_with}.

\subsection{Fuzzy logic and linguistics}
The goal of this section is to interpret the expression (\ref{eq:bounds}) and its multi-class version, i.e., we will explain these inequalities by utilizing the ability to express the fuzzy connectives using plain words. We interpret a $T$-equivalence relation as "similarity", $t$-norms as the "and" connective and implicators as IF-THEN rules. 

First, we interpret the well-definedness of the bounds expressed through Proposition \ref{prop:well_defined} as well as the proof of the proposition.

An equivalent form of the well-definedness of the bounds is given in (\ref{eq:bounds_well_defined2}). For some $u,v \in U$, the interpretation of that expression is:
\begin{equation}
\label{eq:bounds_well_defined_exp}
  \text{IF $u \sim u^{\dagger}$ and $u \, \widetilde{\in}\, A$ THEN IF $v \sim u^{\dagger}$ THEN $v \, \widetilde{\in}\, A$}, 
\end{equation}
where $\sim$ means "is similar to" and $\widetilde{\in}$ stands for fuzzy membership, i.e., we read it as "belongs to". Therefore, we read the previous expression as "If $u$ is similar to $u^{\dagger}$ and $u$ belongs to $A$ then, if $v$ is similar to $u^{\dagger}$ then $v$ is in $A$".

Following the proof of the proposition, the previous expression is equivalent to (residuation property):
\begin{equation*}
     \text{
     IF $ u \sim u^{\dagger}$ and $v \sim u^{\dagger}$ and $u \, \widetilde{\in}\, A$ THEN $v \, \widetilde{\in}\, A$,
    }
\end{equation*}
which is true from the $T$-transitivity of $\sim$ and the granularity property. 
Since expression (\ref{eq:bounds_well_defined_exp}) holds for all $u$ and $v$, it can be translated to:
\begin{equation*}
  \text{IF  $\exists u \in U$ s.t. $u \sim u^{\dagger}$ and $u \, \widetilde{\in}\, A$ THEN $\forall  v \in U$ IF $v \sim u^{\dagger}$ THEN $v \, \widetilde{\in}\, A$}. 
\end{equation*}
Here, the symbols $\exists$ and $\forall$ have their usual meanings: "there exists" and "for all" respectively, while "s.t." is the abbreviation for "such that".
Putting back the membership degree of $u^{\dagger}$, the two inequalities of (\ref{eq:bounds}) can be interpreted as follows. For the left inequality we have:
\begin{equation}
\label{eq:left_inequality_words}
  \text{IF $\exists u \in U$ s.t. $u \sim u^{\dagger}$ and $u \, \widetilde{\in}\, A$, THEN $u^{\dagger} \, \widetilde{\in}\, A$},
\end{equation}
while for the right inequality, we have that:
\begin{equation}
\label{eq:right_inequality_words}
  \text{IF $u^{\dagger} \, \widetilde{\in}\, A$, THEN $\forall  v \in U$, IF $v \sim u^{\dagger}$ THEN $v \, \widetilde{\in}\, A$}.
\end{equation}
We apply the previous expressions on our example with the movie streaming service. From (\ref{eq:left_inequality_words}) we have that: if there exists a movie $u$ that is similar to movie $u^{\dagger}$ and the user likes movie $u$, then the user will also like movie $u^{\dagger}$. From (\ref{eq:right_inequality_words}) we have that: if the user likes movie $u^{\dagger}$ then they should also like all movies that are similar to $u$.

\subsection{Arguments for the classification}
The next step is to identify and to interpret the training instances based on which the decision for a new instance was made. These instances are argmax from the left equation and argmin from the right equation in (\ref{eq:bounds}). The argmax is the instance that supports the decision $u^{\dagger} \in A$ since it is at the same time the most similar to $u^{\dagger}$ and has the highest estimated membership in $A$. All other instances are either less similar to $u^{\dagger}$ or less present in $A$. Hence, the argmax is the argument in favour of decision  $u^{\dagger} \in A$. The argmin is the instance that objects the decision $u^{\dagger} \in A$ since it supports the decision $u^{\dagger} \in coA$. This is visible by applying negator $N$ on the right inequality of (\ref{eq:bounds}) and obtaining
$N(\hat{A}(u^{\dagger})) \geq \max_{u \in U} T(\widetilde{R}(u, u^{\dagger}), N(\hat{A}(u)))$. After obtaining the previous expression, we can use the reasoning from above to justify that the argmin indeed supports $u^{\dagger} \in coA$, i.e., objects $u^{\dagger} \in A$. In other words, the argmin is the argument against the decision $u^{\dagger} \in A$.

The conclusion of the previous paragraph is that we are able to find arguments in favour of the decision, as well as arguments against the decision that we are making. If we need more than one argument for the decision, we can consider a few top instances (not only minimum and maximum) that support and that object the decision. In our example of movie recommendations, for every movie for which we predict the degree of allure to the user, we can identify the movies that support this degree and the movies that object the degree from the movies that user already watched and rated. Moreover, for arguments that are in favour of a decision, value $T(\widetilde{R}( u^{\dagger}, u), \hat{A}(u))$ can be seen as the strength of the argument. The greater the strength, the more confident we are in our decision. On the other hand, for arguments that go against the decision, value $T(\widetilde{R}(u, u^{\dagger}), N(\hat{A}(u)))$ can be seen as the strength of the argument. If the value is greater, then value $I(\widetilde{R}(u, u^{\dagger}), \hat{A}(u))$ is smaller which further implies that the confidence in our decision is also smaller. 

Since we are able to precisely identify the arguments based on which the decision was made and since those arguments can be well comprehended by a human, we may say that FGAC is fully locally transparent. 

\subsection{Transparency comparison with other models}

We compare the transparency of the proposed FGAC with the other known ML models from Section \ref{subsec:models_to_compare_with}. These models can be divided into three groups: instance-based (kFRNN and kNN), prototype-based (LVQ) and rule-based (CART). All these types of models possess some form of local transparency and this is the reason they are selected for this comparison experiment.

There are other transparent methods like linear models (e.g. logistic regression) but based on the way they are interpreted in practice, they can be classified as globally transparent models and therefore not really comparable with FGAC.


In the case of CART, for every performed classification, we are able to identify the corresponding decision rule from the tree structure of the classifier based on which the classification is performed. On the global level, the set of all decision rules can be seen as a form of global transparency. However, in practice, the number of rules can be very large which aggravates the understanding of the model as a whole. If the number of rules is kept relatively small (e.g. less than 10), we may say that we also achieve global transparency. On the other side, decision rules depend on the attributes used in the modeling and any feature engineering process may affect the transparency of CART. Since FGAC is instance-based, it is not dependent on the attribute space used for modeling and therefore, can be seen as superior in that context. However, the interpretation of rules has its advantages in a way that we are able to exactly identify the way one attribute affects the final decision.

For the LVQ method, 
we observe that during the training phase, few points in the attribute space are learned as prototypes for every decision class. Later on, the decision is made based on the closest prototype. Prototype-based and instance-based (like FGAC) methods share similarities in a way that both methods make predictions based on the closest points from the attribute space. The difference is that in prototype-based methods, these points are not from the set of training instances, but they can be any point from the space. This is a huge disadvantage if a certain amount of feature engineering is applied and the original attribute space is changed: the learned prototypes loose their meaning and the method becomes non-transparent. On the other hand, the transparency of instance-based methods does not depend on feature engineering. Therefore, the transparency of FGAC is more advantageous compared that of LVQ.

Now we move to the remaining methods, kNN and kFRNN that are both instance-based, i.e., of the same type as FGAC. Their possible transparency lies in identifying instances based on which a prediction was made. Their transparency heavily depends on the number of instances used for prediction making, i.e., hyperparameter $k$. If $k$ is high, it is really hard to identify how the prediction is made. We observed that during training of the kNN and kFRNN, the majority of performances from Table \ref{tab:final_results} are achieved for higher values of $k$ ($k > 5$) which means that in the majority of cases, the prediction process in both kNN and kFRNN is barely transparent. Also, kNN and kFRNN are not significantly better than FGAC according to Table \ref{tab:pairwise_comparison}.

Now, we want to compare the FGAC with the more transparent variants of the kNN and the kFRNN. We consider a similar transparency level as for FGAC i.e., $k=1$ and a less transparent case when $k \leq 5$. The comparison results are shown in Table \ref{tab:transparent_comparison}. Bold values indicate the best performing model.
\begin{table}[H]
    \centering
    \begin{adjustbox}{width=\columnwidth,center}
    \begin{tabular}{lrrrrr}
\toprule
{} &  FGAC &  kNN ($k\leq 5$) &  kFRNN ($k\leq 5$) &  kNN ($k = 1$) &  kFRNN ($k=1$) \\
\midrule
australian   &                            \textbf{0.862} &      0.837 &                      0.835 &      0.809 &                      0.801 \\
breast       &                           \textbf{  0.673} &      0.658 &                      0.536 &      0.638 &                      0.524 \\
crx          &                             0.827 &     \textbf{ 0.859} &                      0.738 &      0.819 &                      0.723 \\
german       &                             \textbf{0.656} &      0.640 &                      0.509 &      0.637 &                      0.509 \\
saheart      &                            \textbf{ 0.677} &      0.629 &                      0.617 &      0.583 &                      0.593 \\
ionosphere   &                            \textbf{ 0.938} &      0.861 &                      0.854 &      0.823 &                      0.835 \\
mammographic &                            \textbf{ 0.807} &      0.805 &                      0.788 &      0.745 &                      0.738 \\
pima         &                            \textbf{ 0.736} &      0.697 &                      0.700 &      0.672 &                      0.649 \\
wisconsin    &                             0.972 &    \textbf{  0.981} &                      0.970 &      0.954 &                      0.928 \\
vowel        &                             0.976 &     \textbf{ 0.988} &                      0.974 &      0.988 &                      0.974 \\
wdbc         &                             0.935 &    \textbf{  0.967} &                      0.945 &      0.950 &                      0.936 \\
balance      &                           \textbf{ 0.810} &      0.562 &                      0.515 &      0.538 &                      0.469 \\
glass        &                             0.639 &    \textbf{  0.682} &                      0.631 &      0.674 &                      0.555 \\
cleveland    &                           \textbf{  0.359} &      0.314 &                      0.321 &      0.305 &                      0.303 \\
bupa         &                            \textbf{ 0.640} &      0.628 &                      0.633 &      0.619 &                      0.622 \\
haberman     &                          \textbf{ 0.628} &      0.587 &                      0.551 &      0.556 &                      0.522 \\
heart        &                           \textbf{ 0.821} &      0.798 &                      0.767 &      0.749 &                      0.719 \\
spectfheart  &                           \textbf{  0.763} &      0.695 &                      0.660 &      0.600 &                      0.651 \\
\bottomrule
\end{tabular}
\end{adjustbox}
    \caption{Comparison of FGAC with the transparent versions of the kNN and kFRNN}
    \label{tab:transparent_comparison}
\end{table}
After applying the Friedman test on the results in Table \ref{tab:transparent_comparison}, we get a $p$-value of order $10^{-9}$ which means that the performances are indeed different. From the table, we observe that FGAC is the best model in the most occurrences. Using Holm post-hoc analysis, we test if FGAC is indeed the best model and we get that the $p$-value is equal to 0.034. This means that FGAC is indeed the best performing model among the selected transparent instance-based classifiers.

\section{Conclusion and future work}
\label{sec:conclusion}
In this paper we introduced a Fuzzy Granular Approximation Classifier (FGAC) based on granular approximations introduced in \cite{palangetic2021granular} and \cite{palangetic2022multi}. We also introduced a version that uses OWA operators. Furthermore, we discussed ways to speed up the training of the classifier. The empirical comparisons led to the following main conclusions:
\begin{itemize}
    \item The best performing version of FGAC is the one that uses MSE as the loss function and the Euclidean similarity.
    \item Adding OWA operators does not change the performance of FGAC.
    \item In comparison with other models, FGAC outperformed only CART based on the average rank. However, after pairwise significance testing with other models, no other model outperformed FGAC significantly. 
\end{itemize}
Later, we showed that FGAC can be described using plain words due to the linguistic nature of fuzzy logic. The method is also fully locally transparent where for every prediction we are able to identify the arguments for that prediction that are both in favour and against.
Finally, we showed that FGAC is more advantageous compared to other models regarding its local transparency.

We consider the following possibilities for the future work:
\begin{itemize}
    \item In this paper, we used a $T$-equivalence relation that is suitable for ordinal classification problems. On the other side, using non-symmetric $T$-preorder relation is more suitable for the monotone classification problems. Since the binary version of the FGAC is developed also for the non-symmetric relations, we would like to explore its performance in monotone classification problems.
    \item In this paper, we used fuzzy connectives based on the Łukasiewicz $t$-norm. In the future, we would like to explore if using different fuzzy connectives, isomorphic to the Łukasiewicz ones, or in general different fuzzy connectives, can lead us to the better results. 
    \item We also want to explore if the FGAC can be extended to the regression problems.
\end{itemize}



\bibliographystyle{elsarticle-num} 
\bibliography{bibliography}

\begin{thebibliography}{10}
\expandafter\ifx\csname url\endcsname\relax
  \def\url#1{\texttt{#1}}\fi
\expandafter\ifx\csname urlprefix\endcsname\relax\def\urlprefix{URL }\fi
\expandafter\ifx\csname href\endcsname\relax
  \def\href#1#2{#2} \def\path#1{#1}\fi

\bibitem{palangetic2021novel}
M.~Palangetić, C.~Cornelis, S.~Greco, R.~Słowiński, A novel machine learning
  approach to data inconsistency with respect to a fuzzy relation (2021).
\newblock \href {http://arxiv.org/abs/2111.13447} {\path{arXiv:2111.13447}}.

\bibitem{palangetic2022multi}
M.~Palangeti{\'c}, C.~Cornelis, S.~Greco, R.~S{\l}owi{\'n}ski, Multi-class
  granular approximation by means of disjoint and adjacent fuzzy granules,
  arXiv preprint arXiv:2202.07584 (2022).

\bibitem{klement2013triangular}
E.~P. Klement, R.~Mesiar, E.~Pap, Triangular norms, Vol.~8, Springer Science \&
  Business Media, 2013.

\bibitem{palangetic2021fuzzy}
M.~Palangeti{\'c}, C.~Cornelis, S.~Greco, R.~S{\l}owi{\'n}ski, Fuzzy extensions
  of the dominance-based rough set approach, International Journal of
  Approximate Reasoning 129 (2021) 1--19.

\bibitem{palangetic2021granular}
M.~Palangeti{\'c}, C.~Cornelis, S.~Greco, R.~S{\l}owi{\'n}ski, Granular
  representation of {OWA}-based fuzzy rough sets, Fuzzy Sets and Systems (In
  press).

\bibitem{ucidatasets2019}
D.~Dua, C.~Graff, \href{http://archive.ics.uci.edu/ml}{{UCI} machine learning
  repository} (2017).
\newline\urlprefix\url{http://archive.ics.uci.edu/ml}

\bibitem{garavaglia1998smart}
S.~Garavaglia, A.~Sharma, A smart guide to dummy variables: Four applications
  and a macro, in: Proceedings of the northeast SAS users group conference,
  Vol.~43, 1998.

\bibitem{fix1989discriminatory}
E.~Fix, J.~L. Hodges, Discriminatory analysis. nonparametric discrimination:
  Consistency properties, International Statistical Review/Revue Internationale
  de Statistique 57~(3) (1989) 238--247.

\bibitem{quinlan2014c4}
J.~R. Quinlan, C4. 5: programs for machine learning, Elsevier, 2014.

\bibitem{kohonen1995learning}
T.~Kohonen, Learning vector quantization, in: Self-organizing maps, Springer,
  1995, pp. 175--189.

\bibitem{jensen2011fuzzy}
R.~Jensen, C.~Cornelis, Fuzzy-rough nearest neighbour classification and
  prediction, Theoretical Computer Science 412~(42) (2011) 5871--5884.

\bibitem{ramentol2014ifrowann}
E.~Ramentol, S.~Vluymans, N.~Verbiest, Y.~Caballero, R.~Bello, C.~Cornelis,
  F.~Herrera, Ifrowann: imbalanced fuzzy-rough ordered weighted average nearest
  neighbor classification, IEEE Transactions on Fuzzy Systems 23~(5) (2014)
  1622--1637.

\bibitem{van1995python}
G.~Van~Rossum, F.~L. Drake~Jr, Python tutorial, Centrum voor Wiskunde en
  Informatica Amsterdam, The Netherlands, 1995.

\bibitem{gurobi}
{Gurobi Optimization, LLC}, \href{https://www.gurobi.com}{{Gurobi Optimizer
  Reference Manual}} (2022).
\newline\urlprefix\url{https://www.gurobi.com}

\bibitem{ramentol2015ifrowann}
E.~Ramentol, S.~Vluymans, N.~Verbiest, Y.~Caballero, R.~Bello, C.~Cornelis,
  F.~Herrera, Ifrowann: imbalanced fuzzy-rough ordered weighted average nearest
  neighbor classification, IEEE Transactions on Fuzzy Systems 23~(5) (2015)
  1622--1637.

\bibitem{wilcoxon1992individual}
F.~Wilcoxon, Individual comparisons by ranking methods, in: Breakthroughs in
  statistics, Springer, 1992, pp. 196--202.

\bibitem{friedman1937use}
M.~Friedman, The use of ranks to avoid the assumption of normality implicit in
  the analysis of variance, Journal of the american statistical association
  32~(200) (1937) 675--701.

\bibitem{holm1979simple}
S.~Holm, A simple sequentially rejective multiple test procedure, Scandinavian
  journal of statistics (1979) 65--70.

\bibitem{demvsar2006statistical}
J.~Dem{\v{s}}ar, Statistical comparisons of classifiers over multiple data
  sets, The Journal of Machine Learning Research 7 (2006) 1--30.

\bibitem{benavoli2016should}
A.~Benavoli, G.~Corani, F.~Mangili, Should we really use post-hoc tests based
  on mean-ranks?, The Journal of Machine Learning Research 17~(1) (2016)
  152--161.

\end{thebibliography}


\end{document}